\newtheorem{assumption}{Assumption}
\newtheorem{theorem}{Theorem}
\newtheorem{lemma}[theorem]{Lemma}
\newtheorem{corollary}[theorem]{Corollary}
\newcommand{\R}{\mathbb{R}}
\newcommand{\B}{\mathbb{B}}
\newcommand{\E}{\mathbb{E}}
\newcommand{\F}{\mathcal{F}}
\newcommand{\I}{\mathbb{I}}
\renewcommand{\S}{\mathbb{S}}
\newcommand{\inner}[2]{\langle #1, #2 \rangle}
\DeclareMathOperator{\sign}{sign}
\DeclareMathOperator{\prox}{prox}
\DeclareMathOperator*{\argmin}{arg\,min}
\DeclareMathOperator{\unif}{unif}
\begin{document}

% If your paper is accepted and the title of your paper is very long,
% the style will print as headings an error message. Use the following
% command to supply a shorter title of your paper so that it can be
% used as headings.
%
\runningtitle{Semi-Implicit Hybrid Gradient Methods with Application to Adversarial Robustness}

% If your paper is accepted and the number of authors is large, the
% style will print as headings an error message. Use the following
% command to supply a shorter version of the authors names so that
% they can be used as headings (for example, use only the surnames)
%
\runningauthor{Beomsu Kim, Junghoon Seo}

\twocolumn[

\aistatstitle{Semi-Implicit Hybrid Gradient Methods\\with Application to Adversarial Robustness}

\aistatsauthor{Beomsu Kim \And Junghoon Seo}

\aistatsaddress{ Dept. of Mathematical Sciences, KAIST \And  SI Analytics, Inc.} ]

\begin{abstract}
Adversarial examples, crafted by adding imperceptible perturbations to natural inputs, can easily fool deep neural networks (DNNs). One of the most successful methods for training adversarially robust DNNs is solving a nonconvex-nonconcave minimax problem with an adversarial training (AT) algorithm. However, among the many AT algorithms, only Dynamic AT (DAT) and You Only Propagate Once (YOPO) guarantee convergence to a stationary point. In this work, we generalize the stochastic primal-dual hybrid gradient algorithm to develop semi-implicit hybrid gradient methods (SI-HGs) for finding stationary points of nonconvex-nonconcave minimax problems. SI-HGs have the convergence rate $O(1/K)$, which improves upon the rate $O(1/K^{1/2})$ of DAT and YOPO. We devise a practical variant of SI-HGs, and show that it outperforms other AT algorithms in terms of convergence speed and robustness.
\end{abstract}

\section{INTRODUCTION} \label{sec:intro}

Adversarial examples, crafted by adding imperceptible adversarial perturbations to natural inputs, can easily fool deep neural networks (DNNs) \citep{goodfellow2015}. One of the most successful methods for learning adversarially robust DNNs is adversarial training (AT) \citep{athalye2018}. Given a dataset $\{(x_i,y_i)\}_{i = 1}^n$ comprised of $n$ input-label pairs or batches, a vector of perturbations $\delta = (\delta_1, \ldots, \delta_n)$, perturbation radius $\epsilon > 0$, DNN parameters $w$, and a loss function $\ell$, AT solves the nonconvex-nonconcave minimax problem
\begin{align} \label{eq:AT_intro}
\min_w \max_{\|\delta\|_\infty \leq \epsilon} \frac{1}{n} \sum_{i = 1}^n \ell(x_i + \delta_i, y_i, w).
\end{align}

However, among the large number of AT algorithms \citep{silva2020}, many do not have theoretical convergence guarantees. Some such algorithms even exhibit a failure mode called catastrophic overfitting, where the DNN accuracy on adversarial examples generated by multiple steps of projected gradient ascent (PGD) drops to a low value in the middle of training \citep{co2020}.

To the best of our knowledge, there are only two AT algorithms guaranteed to converge to a stationary point of (\ref{eq:AT_intro}): Dynamic AT (DAT) \citep{wang2019} and You Only Propagate Once (YOPO) \citep{zhang2019,seidman2020}. Under certain assumptions on the loss function and stochastic gradients, DAT and YOPO decrease the squared saddle subdifferential norm with rate $O(1/K^{1/2})$ up to an additive constant.

\begin{table*}[t]
\caption{Convergence rates w.r.t. squared saddle subdifferential norm for AT algorithms. \enquote{---} means N/A. \enquote{DT} means deterministic, \enquote{S} means stochastic, and \enquote{DS} means doubly-stochastic. $K$ is the number of iterations. We omit additive constants for the convergence rates. For SSI-HG, smoothness of $\ell(\cdot,y_i,\cdot)$ is actually more than what we need to prove the $O(1/K)$ rate (c.f. Assumption \ref{assump:1} (b) and Theorem \ref{thm:SSI-HG1}). Yet, we have written the stronger assumption in this table since we use the smoothness of $\ell(\cdot,y_i,\cdot)$ in the process of interpreting SSI-HG as MGDA (see Section \ref{sec:SI-HG}).}
\label{table:AT_comp}
\centering
\vspace{1.0em}
\resizebox{\textwidth}{!}{
\begin{tabular}{c c c c c}
\toprule
\textbf{AT Algorithm} & \textbf{Algorithm Type} & \textbf{Assumption} & \textbf{Setting} & \textbf{Convergence} \\
\cmidrule{1-5}
\makecell{FGSM AT \\ \citep{goodfellow2015}} & GDA & --- & DS & --- \\
\cmidrule{1-5}
\makecell{PGD AT \\ \citep{madry2018}} & MGDA & $\ell$ continuously diff. in $w$ & DT & Global Convergence \\
\cmidrule{1-5}
\makecell{DAT \\ \citep{wang2019}} & MGDA & \makecell{$\ell(\cdot,y_i,\cdot)$ smooth \\ $\ell(\cdot,y_i,w)$ locally strongly concave \\ Stochastic gradient bounded variance} & DS & $O(1/K^{1/2})$ \\
\cmidrule{1-5}
\makecell{YOPO \\ \citep{seidman2020}} & MGDA & \makecell{$\ell(\cdot,y_i,\cdot)$ smooth \\ $\ell(\cdot,y_i,w)$ locally strongly concave \\ Stochastic gradient bounded variance} & DS & $O(1/K^{1/2})$ \\
\cmidrule{1-5}
\makecell{SSI-HG \\ (Ours)} & MGDA & \makecell{$\ell(\cdot,y_i,\cdot)$ smooth \\ Weak MVI has solution} & S & $O(1/K)$ \\
\bottomrule
\end{tabular}}
\end{table*}

In this work, we take a step towards AT algorithms with better convergence guarantees. To this end, we consider minimax optimization problems of the form
\begin{align} \label{eq:minimax}
\min_w \max_\delta f(w) + \phi(w,\delta) - g(\delta)
\end{align}
where
\begin{align}
\phi(w,\delta) = \sum_{i = 1}^n \phi_i(w,\delta_i), \qquad g(\delta) = \sum_{i = 1}^n g_i(\delta_i)
\end{align}
and $w \in \R^m$, $\delta = (\delta_1, \ldots, \delta_n) \in \R^{d_1} \times \cdots \times \R^{d_n}$, $f, g_i$ are convex, and $\phi$ is nonconvex-nonconcave. The AT problem (\ref{eq:AT_intro}) is a special case of this template (c.f. Equation (\ref{eq:AT})). We propose two semi-implicit hybrid gradient methods (SI-HGs) which solve (\ref{eq:minimax}) by alternating between a hybrid gradient descent step on $w$ and an implicit ascent step on $\delta$. The first SI-HG is stochastic\footnote{We will refer to methods which use full gradients to update both $w$ and $\delta$ as being \textit{deterministic}, methods which use full gradients to update $w$ and stochastic gradients to update $\delta$ as being \textit{stochastic}, and methods which use stochastic gradients to update both $w$ and $\delta$ as being \textit{doubly-stochastic}.} SI-HG (SSI-HG), which generalizes stochastic primal-dual hybrid gradient (SPDHG) \citep{chambolle2018}. The second SI-HG is deterministic SI-HG (DSI-HG), which is the $n = 1$ version of SSI-HG.

 SI-HGs decrease the squared saddle subdifferential norm with rate $O(1/K)$ under the weak Minty variational inequality (MVI) condition \citep{wmvi2021}. This improves upon the aforementioned $O(1/K^{1/2})$ convergence rate of DAT and YOPO. We also prove that SI-HGs achieve a linear rate under the strong MVI condition \citep{zhou2017,song2020}. Our work is a solid step towards AT algorithms with better convergence properties. A variety of experiments substantiate this claim.

We have the following key contributions.
\begin{itemize}
\item \textbf{SSI-HG.} We propose SSI-HG, which generalizes SPDHG to the nonconvex-nonconcave minimax optimization setting. We show that SSI-HG can be interpreted as a stochastic multi-step gradient descent ascent method (MGDA). Under the weak MVI assumption, we prove that SSI-HG decreases the squared saddle subdifferential norm with rate $O(1/K)$. We also prove linear convergence under the strong MVI assumption. Our work improves upon the convergence rates for other AT algorithms (Table \ref{table:AT_comp}).
\item \textbf{DSI-HG.} When $n = 1$, SSI-HG becomes DSI-HG. DSI-HG inherits all the convergence results for SSI-HG. Hence, if we interpret DSI-HG as MGDA, our work improves upon previous convergence results for MGDA in the nonconvex-nonconcave setting (Table \ref{table:MGDA_comp}).
\item \textbf{Application of SI-HGs to AT.} We extend the theoretical development behind SI-HGs to the AT setting. Specifically, we develop a minibatch version of SI-HG (MSI-HG). We also propose a heuristic for solving the implicit step in the AT setting. In experiments, we demonstrate MSI-HG indeed converges faster and achieves better robustness than other popular AT methods on multiple datasets.
\end{itemize}

\begin{table*}
\caption{Convergence rates for MGDA for (\ref{eq:minimax}) with $n = 1$ and nonconvex-nonconcave $\phi$ . \enquote{---} means no additional assumptions. Here, $K$ is the number of iterations. For MGDA, one iteration consists of one $x$ update and multiple $y$ updates. The optimality metric for the first and third rows is the squared saddle subdifferential norm whereas the optimality metric for the second row is the squared gradient norm of the Moreau envelope of $\max_\delta \phi(\cdot,\delta)$.}
\label{table:MGDA_comp}
\centering
\vspace{1.0em}
\resizebox{\textwidth}{!}{
\begin{tabular}{c c c c c}
\toprule
$f(w)$ & $\phi(w,\delta)$ & $g(\delta)$ & \textbf{Assumption} & \textbf{Convergence Rate} \\ \cmidrule{1-5}
\makecell{Indicator function of a \\ convex compact set} & \makecell{Smooth \\ $-\phi(w,\cdot)$ is $\mu$-PL} & 0 & --- & $O(1/K)$ \citep{nouiehed2019} \\
\cmidrule{1-5}
0 & Lipschitz and smooth & 0 & --- & $O(1/K^{1/2})$ \citep{jin2020} \\
\cmidrule{1-5}
Convex & Smooth & Convex & Weak MVI has solution & $O(1/K)$ (Ours) \\
\bottomrule
\end{tabular}}
\end{table*}

\section{RELATED WORK}

\textbf{Stochastic Primal-Dual Coordinate Methods (SPDCMs).} When $\phi$ is bilinear, the template (\ref{eq:minimax}) encompasses problems such as total variation regularized imaging \citep{chambolle2018} and regularized empirical risk minimization \citep{shwartz2013}. SPDCMs are often used to solve such problems \citep{chambolle2018,fercoq2019,latafat2019,alacaoglu2020}. Due to stochastic coordinate updates in the variable $\delta$, these methods have lower per-iteration costs than deterministic primal-dual hybrid gradient methods \citep{chambolle2011,condat2013,vu2013}. SSI-HG is inspired by a SPDCM called SPDHG, and it generalizes SPDHG to the setting where the coupling function $\phi$ is nonconvex-nonconcave.

\textbf{MGDA.} MGDA \citep{nouiehed2019,thekumparampil2019,barazandeh2020,jin2020} alternates between one gradient descent step for $x$ and multiple gradient ascent steps for $y$ to solve minimax problems. There are two convergence rates for MGDA in the nonconvex-nonconcave setting. \citet{nouiehed2019} proves the rate $O(1/K)$ with respect to the squared saddle subdifferential norm assuming $\phi$ is smooth, $-\phi(w,\cdot)$ is $\mu$-Polyak-{\L}ojasiewicz ($\mu$-PL), $f$ is an indicator function of some convex compact set, and $g \equiv 0$. \citet{jin2020} proves the rate $O(1/K^{1/2})$ with respect to the squared gradient norm of the Moreau envelope of $\max_\delta \phi(\cdot,\delta)$ assuming $f \equiv g \equiv 0$ and $\phi$ is Lipschitz and smooth. See Section 4 of the work by \citet{jin2020} for an explanation on the relation between a function and its Moreau envelope. SI-HGs can be interpreted as variants of MGDA, and our work improves upon previous convergence results for MGDA in the nonconvex-nonconcave setting (Table \ref{table:MGDA_comp}).

\textbf{Convergence Guarantees for AT Methods.} PGD AT \citep{madry2018} alternates between one gradient descent step for $w$ and multiple projected (sign) gradient ascent steps for $\delta$. Hence, PGD AT is essentially MGDA applied to (\ref{eq:AT_intro}). However, we cannot apply the convergence results by \citet{nouiehed2019} or \citet{jin2020} for MGDA to PGD AT. This is because $g$ is an indicator function in the AT setting (c.f. Equation \eqref{eq:AT}). \citet{madry2018} shows the global convergence of PGD AT in the deterministic setting via Danskin's Theorem. There are also works which show the convergence of PGD AT in terms of the loss function value \citep{xing2021,gao2019,zhang2020}.

There are two other AT methods which guarantee convergence to a stationary point of \eqref{eq:AT_intro}. They are variants of PGD AT. The first method, DAT, uses a criterion called first-order stationary condition to adaptively control the number of ascent steps in the inner loop \citep{wang2019}. The second method, YOPO, exploits the compositional structure of DNNs to reduce the computational cost of the inner loop \citep{zhang2019,seidman2020}.

Both DAT and YOPO possess a $O(1/K^{1/2})$ rate of convergence under smoothness and locally strongly concave assumptions. In this work, we prove a $O(1/K)$ convergence rate for SSI-HG under smoothness and weak MVI assumptions.

% \textbf{Distributionally Robust Optimization (DRO).} DRO solves a Lagrangian relaxation of \eqref{eq:AT_intro} and converges to a stationary point of the relaxed objective \citep{sinha2018,najafi2019}. We do not consider DRO in our work, as we are interested in algorithms which solve \eqref{eq:AT_intro}.

% \textbf{CO.} Recently, Wong et al. \citep{wong2020} found that FGSM AT, i.e., training DNNs on adversarial examples generated using the fast gradient sign method (FGSM) \citep{goodfellow2015}, could produce DNNs as robust as those trained by PGD AT. FGSM AT is a variant of gradient descent ascent (GDA) as it alternates between a single gradient descent step on $w$ and a single projected sign gradient ascent step on $\delta$. GDA generally requires less computation per iteration than MGDA, so this discovery led to a drastic reduction in training time for robust DNNs. However, Wong et al. \citep{wong2020} also observed that using FGSM AT could lead to CO, a phenomenon where the accuracy on PGD adversarial examples drop, but the accuracy on FGSM adversarial examples remains the same or rises. In fact, PGD AT without sufficient number of $\delta$ updates in the inner loop was also reported to be vulnerable to CO \citep{co2020}.

\section{PRELIMINARIES} \label{sec:prelim}

\textbf{Notations.} We define the saddle subdifferential operator of (\ref{eq:minimax}) as
\begin{align*}
F(w,\delta) =
\begin{bmatrix}
\partial f(w) + \nabla_w \phi(w,\delta) \\
\partial g(\delta) -\nabla_\delta \phi(w,\delta)
\end{bmatrix}
\end{align*}
and its norm as
\begin{align*}
\|F(w,\delta)\| = \inf_{\gamma_f, \gamma_g}
\left\| \begin{bmatrix}
\gamma_f + \nabla_w \phi(w,\delta) \\
\gamma_g -\nabla_\delta \phi(w,\delta)
\end{bmatrix} \right\|
\end{align*}
where the infimum is taken over $\gamma_f \in \partial f(w)$ and $\gamma_g \in \partial g(\delta)$. For a positive scalar $\eta$, we denote $\|\cdot\|^2_\eta = \eta \|\cdot\|^2$. Proximal operator with some function $h(z)$ and $\eta > 0$ is defined as
\begin{align*}
\prox_h^\eta(z) = \argmin_u h(u) + \frac{1}{2} \|u - z\|^2_{\eta^{-1}}.
\end{align*}
We also use the notation $[n] = \{1, \ldots, n\}$. Given a set $\mathcal{S}$, $\I_\mathcal{S}(z)$ is the indicator function which is $0$ on $z \in \mathcal{S}$ and $\infty$ on $z \notin \mathcal{S}$. $\Pi_{\mathcal{S}}[z]$ denotes the projection of $z$ onto $\mathcal{S}$ w.r.t. the Euclidean norm.

\textbf{Nonconvex-nonconcave Minimax Optimization.} We are interested in finding a first-order stationary point of (\ref{eq:minimax}), i.e., a point $(w,\delta)$ which satisfies
\begin{align*}
\mathbf{0} \in F(w,\delta) \quad \text{or equivalently,} \quad \|F(w,\delta)\| = 0.
\end{align*}
We first make the following common assumption.
\begin{assumption} \label{assump:1}
(a) $f$ and $g_i$ are closed, convex, and proper, \\
(b) $\nabla \phi$ is Lipschitz continuous, i.e., there are $L_{11}, L_{12}, L_{22} > 0$ such that
\begin{align*}
\| \nabla_w \phi(w,\delta) - \nabla_w \phi(\bar{w},\delta) \| &\leq L_{11} \|w - \bar{w}\|, \\
\| \nabla_\delta \phi(w,\delta) - \nabla_\delta \phi(\bar{w},\delta) \| &\leq L_{12} \|w - \bar{w}\|, \\
\| \nabla_w \phi(w,\delta) - \nabla_w \phi(w,\bar{\delta}) \| &\leq L_{12} \|\delta - \bar{\delta}\|, \\
\| \nabla_\delta \phi(w,\delta) - \nabla_\delta \phi(w,\bar{\delta}) \| &\leq L_{22} \|\delta - \bar{\delta}\|.
\end{align*}
\end{assumption}
However, even with Assumption \ref{assump:1}, finding a stationary point of (\ref{eq:minimax}) is, in general, intractable \citep{wmvi2021}. Thus, we need to impose additional structures onto the problem. One possible structure is the assumption that there is a solution $(w^*,\delta^*)$ to the MVI problem
\begin{align*}
\begin{bmatrix}
\gamma_f + \nabla_w \phi(w,\delta) \\
\gamma_g -\nabla_\delta \phi(w,\delta)
\end{bmatrix}^\top
\begin{bmatrix}
w - w^* \\
\delta - \delta^*
\end{bmatrix} \geq 0
\end{align*}
$\forall (w,\delta)$, $\forall \gamma_f \in \partial f(w)$, $\forall \gamma_g \in \partial g(\delta)$.
The MVI assumption has been studied by numerous works \citep{dang2014,malitsky2019,liu2020}, and algorithms with good convergence guarantees under the MVI assumption have shown better performance than previous algorithms in training Generative Adversarial Nets as well \citep{gidel2019,mert2019,liu2020gan}. In this work, we consider the following variants of the MVI assumption:

\begin{assumption} \label{assump:2}
There is a solution $(w^*,\delta^*)$ to the weak MVI problem
\begin{align*}
\begin{bmatrix}
\gamma_f + \nabla_w \phi(w,\delta) \\
\gamma_g -\nabla_\delta \phi(w,\delta)
\end{bmatrix}^\top
\begin{bmatrix}
w - w^* \\
\delta - \delta^*
\end{bmatrix} \\
\geq
-\frac{\rho}{2} \left\|
\begin{bmatrix}
\gamma_f + \nabla_w \phi(w,\delta) \\
\gamma_g - \nabla_\delta \phi(w,\delta)
\end{bmatrix}
\right\|^2
\end{align*}
for some $\rho > 0$, $\forall (w,\delta)$, $\forall \gamma_f \in \partial f(w)$, $\forall \gamma_g \in \partial g(\delta)$.
\end{assumption}

\begin{assumption} \label{assump:3}
There is a solution $(w^*,\delta^*)$ to the strong MVI problem
\begin{align*}
\begin{bmatrix}
\gamma_f + \nabla_w \phi(w,\delta) \\
\gamma_g -\nabla_\delta \phi(w,\delta)
\end{bmatrix}^\top
\begin{bmatrix}
w - w^* \\
\delta - \delta^*
\end{bmatrix} \geq
\frac{\mu}{2} \left\|
\begin{bmatrix}
w - w^*\\
\delta - \delta^*
\end{bmatrix}
\right\|^2
\end{align*}
for some $\mu > 0$, $\forall (w,\delta)$, $\forall \gamma_f \in \partial f(w)$, $\forall \gamma_g \in \partial g(\delta)$.
\end{assumption}

We will discuss the justification of these theoretical assumptions in Section \ref{sec:limitation}.

\section{SEMI-IMPLICIT HYBRID GRADIENT METHODS (SI-HGs)} \label{sec:SI-HG}

\textbf{Stochastic SI-HG (SSI-HG).} We propose SSI-HG (Algorithm \ref{alg:SSI-HG}) to find a stationary point of (\ref{eq:minimax}). SSI-HG alternates between a hybrid gradient ascent step on $w$ (line 4) and a stochastic implicit step on $\delta$ (line 6). We remark that when $\phi(w,\delta)$ is bilinear, the implicit step becomes explicit, and SSI-HG reduces to SPDHG with uniform sampling probability\footnote{Let $\phi(w,\delta) = \langle A w, \delta \rangle$ for some matrix $A$. Then $\nabla_w \phi(w,\delta) = A^\top \delta$ and $\nabla_\delta \phi(w,\delta) = A w$. Plug these relations into Algorithm 1, and compare with Algorithm 1 in the paper for SPDHG \citep{chambolle2018}.}. Hence, SSI-HG generalizes SPDHG to the nonconvex-nonconcave minimax optimization setting.

\begin{algorithm}[ht]
\caption{\texttt{SSI-HG}}
\label{alg:SSI-HG}
\begin{algorithmic}[1]
\State \textbf{Input:} $(w^{-1},\delta^{-1}) = (w^0,\delta^0)$, $\sigma$, $\tau$, $\theta$.
\For {$k = 0, 1, 2, \ldots$}
\State $q^k = \nabla_w \phi(w^{k - 1},\delta^{k - 1}) - (n - 1)\{ \nabla_w \phi(w^k,\delta^k) - \nabla_w \phi(w^k,\delta^{k - 1}) \}$
\State $w^{k + 1} = \prox^{\sigma}_f [w^k - \sigma \{ \nabla_w \phi(w^k,\delta^k) + \theta (\nabla_w \phi(w^k,\delta^k) - q^k) \}]$
\State Draw $i_k \in [n]$ uniformly at random.
\State $\delta^{k + 1}_{i_k} = \prox^{\tau}_{g_{i_k}} [\delta^k_{i_k} + \tau \nabla_{\delta_{i_k}} \phi_{i_k}(w^{k + 1},\delta^{k + 1}_{i_k})]$ and $\delta^{k + 1}_i = \delta^k_i$ for all $i \neq i_k$
\EndFor
\end{algorithmic}
\end{algorithm}

We can also interpret SSI-HG as stochastic MGDA. The implicit step is equivalent to
\begin{align} \label{eq:implicit}
\delta^{k + 1}_{i_k} = \argmin_{\delta_{i_k}} g_{i_k}(\delta_{i_k}) &- \phi_{i_k}(w^{k + 1},\delta_{i_k}) \nonumber \\
&+ \frac{1}{2\tau} \|\delta_{i_k} - \delta^k_{i_k}\|^2.
\end{align}
The equivalence is proven in Appendix \ref{append:equiv}. If we assume each $\phi_i$ is smooth and use an iterative proximal method such as FISTA \citep{fista} to solve (\ref{eq:implicit}), SSI-HG alternates between gradient descent on $w$ and multi-step stochastic gradient ascent on $\delta$. Hence, SSI-HG becomes stochastic MGDA. We now present the main results of our paper. The proofs are deferred to Appendices \ref{append:thm1_proof} and \ref{append:thm2_proof}.

\begin{theorem} \label{thm:SSI-HG1}
Suppose Assumptions \ref{assump:1} and \ref{assump:2} are true. Let $\{(w^k,\delta^k)\}$ be the sequence generated by SSI-HG, and define the full-dimensional update (which only depends on $w^k$ and $\delta^{k - 1}$)
\begin{align*}
\hat{\delta}^k = \prox^\tau_g [\delta^{k - 1} + \tau \nabla_\delta \phi(w^k,\hat{\delta}^k)].
\end{align*}
Let $L = \max\{L_{11}, L_{12}, L_{22}\}$. If $\theta = 1$ and
\begin{gather*}
0 < \sigma \leq \frac{1}{6L}, \qquad 0 < \tau \leq \frac{1}{6nL}, \\
0 < \rho < \frac{1}{6 \max\{\sigma^{-1} + 4L^2\sigma, \tau^{-1} + 12 n L^2 \tau\}},
\end{gather*}
we have
\begin{align*}
\frac{1}{K} \sum_{k = 1}^K \E \|F(w^k,\hat{\delta}^k)\|^2 = O(1/K).
\end{align*}
\end{theorem}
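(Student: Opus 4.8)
The plan is to run a Lyapunov-function descent argument in the spirit of the convergence analysis of SPDHG \citep{chambolle2018}, pushed into the nonconvex-nonconcave regime by invoking the weak MVI inequality of Assumption \ref{assump:2} at the reference point $(w^*,\delta^*)$, much as is done for recent extragradient- and optimistic-gradient-type methods under weak MVI \citep{wmvi2021}. The first point to record is that the iterate $(w^k,\hat\delta^k)$ at which stationarity is measured comes equipped with natural subgradients read off from the proximal steps. The optimality condition for $\hat\delta^k=\prox^\tau_g[\delta^{k-1}+\tau\nabla_\delta\phi(w^k,\hat\delta^k)]$ gives $\tfrac1\tau(\delta^{k-1}-\hat\delta^k)+\nabla_\delta\phi(w^k,\hat\delta^k)\in\partial g(\hat\delta^k)$, so the $\delta$-block of $F(w^k,\hat\delta^k)$ for this subgradient is exactly $\tfrac1\tau(\delta^{k-1}-\hat\delta^k)$; likewise the optimality condition for the $w$-step of iteration $k-1$ produces a $\gamma_f\in\partial f(w^k)$ with $\gamma_f+\nabla_w\phi(w^k,\hat\delta^k)$ expressible through $\tfrac1\sigma(w^{k-1}-w^k)$, $q^{k-1}$ and the coupling gradients already computed. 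Consequently, using the $L$-Lipschitzness of $\nabla\phi$ from Assumption \ref{assump:1}(b), $\|F(w^k,\hat\delta^k)\|^2$ is controlled by $\tfrac1{\sigma^2}\|w^k-w^{k-1}\|^2$, $\tfrac1{\tau^2}\|\hat\delta^k-\delta^{k-1}\|^2$ and a bounded number of gradient-difference terms.

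The core argument then proceeds in five moves. (i) Write the standard three-point inequality for the $w$-proximal step against the test point $w^*$, and for the implicit $\delta_{i_k}$-step against $\delta^*_{i_k}$; because the implicit step $\delta^{k+1}_{i_k}=\prox^\tau_{g_{i_k}}[\delta^k_{i_k}+\tau\nabla_{\delta_{i_k}}\phi_{i_k}(w^{k+1},\delta^{k+1}_{i_k})]$ evaluates the coupling gradient at the \emph{new} point, its three-point inequality carries no ``gradient-at-the-old-point'' discrepancy, which is ultimately what upgrades the rate from $O(1/K^{1/2})$ to $O(1/K)$. (ii) Add the two inequalities, recognize the emerging inner product as one of the form $\langle F(\cdot),\ \cdot-(w^*,\delta^*)\rangle$ evaluated at an iterate, and apply Assumption \ref{assump:2} to replace it by $-\tfrac\rho2\|F\|^2$. (iii) Take the conditional expectation over the uniform draw of $i_k$ given the history $\mathcal{F}_k$; here one uses the randomized-coordinate identities $\E[\|\delta^{k+1}-\delta^*\|^2\mid\mathcal{F}_k]=(1-\tfrac1n)\|\delta^k-\delta^*\|^2+\tfrac1n\|\hat\delta^{k+1}-\delta^*\|^2$ and $\E[\nabla_w\phi(w^{k+1},\delta^{k+1})\mid\mathcal{F}_k]=(1-\tfrac1n)\nabla_w\phi(w^{k+1},\delta^k)+\tfrac1n\nabla_w\phi(w^{k+1},\hat\delta^{k+1})$, which is exactly why the full-dimensional update $\hat\delta$ enters the statement, together with the key identity $\E[\nabla_w\phi(w^k,\delta^k)-q^k\mid\mathcal{F}_k]=\nabla_w\phi(w^k,\hat\delta^k)-\nabla_w\phi(w^{k-1},\delta^{k-1})$, which is the whole point of the hybrid correction $q^k$ (with $\theta=1$): it turns the extrapolation contribution into a difference of consecutive quantities, hence something that telescopes. (iv) Split every cross term with Young's inequality, bound gradient differences by $L$ times iterate differences, and collect everything into a nonnegative Lyapunov function $\Phi^k$ combining $\|w^k-w^*\|^2_{\sigma^{-1}}$, $\E\|\delta^k-\delta^*\|^2_{\tau^{-1}}$, a ``memory'' term that carries the extrapolation so that its boundary contribution telescopes, and small multiples of $\|w^k-w^{k-1}\|^2$ and $\|\delta^k-\delta^{k-1}\|^2$; the outcome is a one-step estimate of the form $\E\Phi^{k+1}\le\E\Phi^k-c\,\E\|F(w^k,\hat\delta^k)\|^2$ with $c>0$. (v) Sum over $k=1,\dots,K$, telescope $\Phi^k$, discard the nonnegative final term, and use the initialization $(w^{-1},\delta^{-1})=(w^0,\delta^0)$ to kill the $k=0$ boundary contribution of the memory term; dividing by $K$ gives the claim. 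One must also verify that the implicit iterates $\delta^{k+1}_{i_k}$ and $\hat\delta^k$ exist and are unique, which is a Banach fixed-point argument: with $\tau\le\tfrac1{6nL}$ the map $\delta_{i_k}\mapsto\prox^\tau_{g_{i_k}}[\delta^k_{i_k}+\tau\nabla_{\delta_{i_k}}\phi_{i_k}(w^{k+1},\delta_{i_k})]$ is a contraction, since the proximal operator is nonexpansive and $\nabla_{\delta_{i_k}}\phi_{i_k}$ is $L$-Lipschitz in its second argument.

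The step I expect to be the main obstacle is making three mechanisms cohere inside one potential at the same time: the SPDHG-style coordinate randomization, which imposes the $1/n$ bookkeeping and the replacement of $\delta^{k+1}$ by $\hat\delta^{k+1}$; the hybrid/extrapolation correction $q^k$, whose contribution must be arranged to telescope rather than accumulate, which in turn dictates precisely which memory term has to sit inside $\Phi^k$; and the implicit ascent step, whose fixed-point character has to be threaded through both of the above. Layered on top of this structural juggling is a round of constant-chasing: every Young's-inequality remainder ($\propto L^2$ times an iterate difference, or $\propto\rho$ times a squared saddle subdifferential norm) must be dominated while leaving a strictly positive coefficient in front of $\|F(w^k,\hat\delta^k)\|^2$. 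This is exactly what forces the stated thresholds $\sigma\le\tfrac1{6L}$, $\tau\le\tfrac1{6nL}$, and $\rho<\tfrac{1}{6\max\{\sigma^{-1}+4L^2\sigma,\ \tau^{-1}+12nL^2\tau\}}$, and arranging for all of these inequalities to close simultaneously is the delicate heart of the proof.
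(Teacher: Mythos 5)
Your proposal follows essentially the same route as the paper's own proof: proximal optimality conditions giving the subgradient representation of $F(w^k,\hat\delta^k)$ in terms of successive iterate differences (the paper's Lemma \ref{lemma:6}), the coordinate-expectation identities tying $\delta^{k+1}$ to $\hat\delta^{k+1}$ and making the hybrid correction $q^k$ telescope in conditional expectation (Lemma \ref{lemma:one-iter}), the weak MVI inequality at $(w^*,\delta^*)$ with the $-\tfrac{\rho}{2}\|F\|^2$ term absorbed via the same difference bounds, and Young/Lipschitz constant-chasing followed by telescoping under the stated step-size conditions. Only two minor bookkeeping points differ from the write-up you would need: your key identity for $\E[\nabla_w\phi(w^k,\delta^k)-q^k]$ holds when conditioning on $\F_{k-1}$ (not $\F_k$, where it is trivial), and discarding the terminal Lyapunov value requires bounding the boundary inner product $\langle w^*-w^K,\nabla_w\phi(w^K,\delta^K)-q^K\rangle$ by quadratic terms exactly as in the paper's Equation (\ref{eq:inner_bound}), since that term is not a priori nonnegative.
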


\begin{theorem} \label{thm:SSI-HG2}
Suppose Assumptions \ref{assump:1} and \ref{assump:3} are true. Let $\{(w^k,\delta^k)\}$ be the sequence generated by SSI-HG. Let $L = \max\{L_{11}, L_{12}, L_{22}\}$. If $\theta$, $\sigma$, $\tau$ satisfy
\begin{gather*}
0 < \sigma \leq \frac{1}{3L}, \qquad 0 < \tau \leq \frac{1}{3nL}, \\
\theta = \max\left\{ \frac{1}{1 + \mu \sigma}, \frac{1 + (n - 1) \mu \tau / n}{1 + \mu \tau} \right\},
\end{gather*}
we have
\begin{align*}
\E \|w^* - w^K\|^2 = O(\theta^K), \qquad \E \|\delta^* - \delta^K\|^2 = O(\theta^K).
\end{align*}
\end{theorem}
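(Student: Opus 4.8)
The plan is to run a Lyapunov (potential-function) argument, in the spirit of the convergence proofs for SPDHG \citep{chambolle2018} and its relatives, but with the bilinearity of the coupling there replaced by Assumption \ref{assump:1}(b) together with the strong MVI (Assumption \ref{assump:3}). Concretely, I would look for a potential of the form
\begin{align*}
\mathcal{V}^k = \tfrac{1}{\sigma}\|w^* - w^k\|^2 + \tfrac{1}{\tau}\|\delta^* - \delta^k\|^2 + (\text{a nonnegative memory term in } w^k - w^{k-1} \text{ and the increments of } \nabla_w\phi)
\end{align*}
and aim to show $\E[\mathcal{V}^{k+1}\mid \F^k] \le \theta\,\mathcal{V}^k$ for the natural filtration $\F^k$; iterating and taking total expectations then gives $\E\|w^*-w^K\|^2,\ \E\|\delta^*-\delta^K\|^2 = O(\theta^K)$. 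The memory term is precisely what the hybrid-gradient variable $q^k$ and the extrapolation parameter $\theta$ are engineered to control: $q^k$ is a variance-reduction-style aggregate whose purpose is to make the primal--dual coupling residual telescope from one iteration to the next.

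The per-iteration estimates come from the prox optimality conditions. The $w$-step gives a subgradient $\gamma_f^{k+1}\in\partial f(w^{k+1})$ with $\tfrac{1}{\sigma}(w^k-w^{k+1}) = \gamma_f^{k+1} + \nabla_w\phi(w^k,\delta^k) + \theta\bigl(\nabla_w\phi(w^k,\delta^k)-q^k\bigr)$, and the implicit $\delta$-step, via the equivalence \eqref{eq:implicit}, gives $\gamma_{g_{i_k}}^{k+1}\in\partial g_{i_k}(\delta_{i_k}^{k+1})$ with $\tfrac{1}{\tau}(\delta_{i_k}^k-\delta_{i_k}^{k+1}) = \gamma_{g_{i_k}}^{k+1}-\nabla_{\delta_{i_k}}\phi_{i_k}(w^{k+1},\delta_{i_k}^{k+1})$. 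I would test these identities against $w^*$ and $\delta^*$, use convexity of $f$ and $g_i$ to discard the subgradient terms (e.g.\ $\langle \gamma_f^{k+1},\, w^*-w^{k+1}\rangle \le f(w^*)-f(w^{k+1})$), and complete squares. At this point the decisive input is Assumption \ref{assump:3}: after rearrangement the inner product of $F(w^{k+1},\delta^{k+1})$ with $(w^{k+1}-w^*,\delta^{k+1}-\delta^*)$ is bounded below by $\tfrac{\mu}{2}(\|w^{k+1}-w^*\|^2+\|\delta^{k+1}-\delta^*\|^2)$, and it is exactly this $\mu$ that upgrades the "$\le\mathcal{V}^k$" telescoping available under pure MVI into the geometric "$\le\theta\mathcal{V}^k$." Tracking where the $\mu\sigma$ and $\mu\tau$ terms land reproduces the two candidate factors: $\tfrac{1}{1+\mu\sigma}$ from the primal prox contraction, and $\tfrac{1+(n-1)\mu\tau/n}{1+\mu\tau}$ from the dual, since on a given step only one of the $n$ blocks receives the $\tfrac{1}{1+\mu\tau}$ contraction — in conditional expectation $\tfrac1n\cdot\tfrac{1}{1+\mu\tau} + \tfrac{n-1}{n}\cdot 1 = \tfrac{1+(n-1)\mu\tau/n}{1+\mu\tau}$ — and $\theta$ is taken as the larger of the two so that a single ratio governs both coordinates.

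The stochastic bookkeeping is the next step: conditioning on $\F^k$ and using that $i_k$ is uniform on $[n]$, any block quantity has conditional expectation equal to $\tfrac1n$ of the corresponding full-dimensional quantity — this is why the $\tau^{-1}$-weighting in $\mathcal{V}^k$ and the full implicit update $\hat\delta^k$ enter naturally, and why the $n-1$ unchanged blocks generate the $(n-1)/n$ piece of the dual factor. One must also check that the law of total expectation handles the memory term consistently under the $k\mapsto k+1$ shift, which is where the definition of $q^k$ in terms of $(w^{k-1},\delta^{k-1})$ and the single changed block is used.

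I expect the main obstacle to be controlling the coupling error caused by $\phi$ not being bilinear. In SPDHG the cross terms $\langle w^{k+1}-w^*,\, \nabla_w\phi(\cdot,\delta^{k+1})-\nabla_w\phi(\cdot,\delta^*)\rangle$ and $\langle \delta^{k+1}-\delta^*,\, \nabla_\delta\phi(w^{k+1},\cdot)-\nabla_\delta\phi(w^*,\cdot)\rangle$ cancel identically; here they cancel only up to residuals like $\|\nabla_w\phi(w^k,\delta^k)-\nabla_w\phi(w^k,\delta^{k-1})\|$ and $\|\nabla_w\phi(w^{k+1},\delta^{k+1})-\nabla_w\phi(w^k,\delta^{k+1})\|$, which I would bound via $L_{11},L_{12},L_{22}$ from Assumption \ref{assump:1}(b) and absorb into the negative squared-distance terms — and this absorption is exactly what forces $\sigma\le 1/(3L)$ and $\tau\le 1/(3nL)$ with $L=\max\{L_{11},L_{12},L_{22}\}$ (the extra factor $n$ on $\tau$ because the error carried by $q^k$ scales with $n$). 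A clean way to organize the whole argument is to first prove one master inequality under the weaker Assumption \ref{assump:2} that yields both theorems: there the weak-MVI slack $-\tfrac{\rho}{2}\|F\|^2$ is paid for by the spare room in the step-size conditions, giving the $O(1/K)$ average-iterate bound of Theorem \ref{thm:SSI-HG1}; under Assumption \ref{assump:3} the same master inequality with the $+\tfrac{\mu}{2}\|\cdot-(w^*,\delta^*)\|^2$ gain in place of that slack telescopes geometrically with ratio $\theta$, which is Theorem \ref{thm:SSI-HG2}.
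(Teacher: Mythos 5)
Your proposal is correct and follows essentially the same route as the paper: a one-iteration Lyapunov contraction whose potential contains the distances to $(w^*,\delta^*)$, the momentum inner product with $\nabla_w\phi-q$, and memory terms in the increments, with the strong MVI supplying the $\mu$-gain, the uniform-block conditional-expectation identities (hence the virtual full update $\hat\delta^{k+1}$ and the dual factor $\tfrac{1}{n}\cdot\tfrac{1}{1+\mu\tau}+\tfrac{n-1}{n}=\tfrac{1+(n-1)\mu\tau/n}{1+\mu\tau}$), Lipschitz absorption of the non-bilinear cross terms forcing the step-size bounds, and $\theta$ taken as the larger of the two ratios. One small correction to your outline: do not discard the subgradients via convexity of $f,g$ (those function-value gaps have no sign here) --- Assumption \ref{assump:3} is applied directly to the full saddle operator with the specific subgradients produced by the prox optimality conditions, and at the point $(w^{k+1},\hat\delta^{k+1})$ rather than the stochastic iterate $(w^{k+1},\delta^{k+1})$, since only block $i_k$ carries a fresh optimality relation; this is exactly what your $\hat\delta$ bookkeeping already provides.
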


Our proofs for SSI-HG are inspired by those in the work of \citet{alacaoglu2020}. Specifically, Assumption \ref{assump:2} or \ref{assump:3} allows us to characterize the one-iteration behavior of SSI-HG in the nonconvex-nonconcave scenario (Lemma \ref{lemma:one-iter} in Appendix \ref{append:proofs}). We then use telescoping or induction to establish Theorems \ref{thm:SSI-HG1} and \ref{thm:SSI-HG2}.

Since the AT problem (\ref{eq:AT_intro}) is a special case of (\ref{eq:minimax}) (c.f. Equation (\ref{eq:AT})), Theorem \ref{thm:SSI-HG1} improves upon the convergence rates for DAT and YOPO (Table \ref{table:AT_comp}). We have simplified the parameter conditions in Theorems \ref{thm:SSI-HG1} and \ref{thm:SSI-HG2} for readability. The general forms are written in Appendices \ref{append:thm1_proof} and \ref{append:thm2_proof}.

\textbf{Deterministic SI-HG (DSI-HG).} When $n = 1$, SSI-HG becomes DSI-HG (Algorithm \ref{alg:DSI-HG}). All the convergence results in Theorems \ref{thm:SSI-HG1} and \ref{thm:SSI-HG2} hold for DSI-HG with expectation removed (Corollaries \ref{cor:DSI-HG1} and \ref{cor:DSI-HG2}). In particular, if we interpret DSI-HG as MGDA, Corollary \ref{cor:DSI-HG1} improves upon previous guarantees for MGDA in the nonconvex-nonconcave setting (Table \ref{table:MGDA_comp}).

\begin{algorithm}[H]
\caption{\texttt{DSI-HG}}
\label{alg:DSI-HG}
\begin{algorithmic}[1]
\State \textbf{Input:} $(w^{-1},\delta^{-1}) = (w^0,\delta^0)$, $\sigma$, $\tau$, $\theta$.
\For {$k = 0, 1, 2, \ldots$}
\State $w^{k + 1} = \prox^{\sigma}_f [w^k - \sigma \{ \nabla_w \phi(w^k,\delta^k) + \theta (\nabla_w \phi(w^k,\delta^k) - \nabla_w \phi(w^{k - 1},\delta^{k - 1})) \}]$
\State $\delta^{k + 1} = \prox^{\tau}_{g} [\delta^k + \tau \nabla_{\delta} \phi(w^{k + 1},\delta^{k + 1})]$
\EndFor
\end{algorithmic}
\end{algorithm}

\begin{corollary} \label{cor:DSI-HG1}
Suppose Assumptions \ref{assump:1} and \ref{assump:2} are true. Let $\{(w^k,\delta^k)\}$ be the sequence generated by DSI-HG. Let $L = \max\{L_{11}, L_{12}, L_{22}\}$. If $\theta = 1$ and
\begin{gather*}
0 < \sigma, \tau \leq \frac{1}{6L}, \\
0 < \rho < \frac{1}{6 \max\{\sigma^{-1} + 4L^2\sigma, \tau^{-1} + 12L^2 \tau\}},
\end{gather*}
we have
\begin{align*}
\frac{1}{K} \sum_{k = 1}^K \|F(w^k,\delta^k)\|^2 = O(1/K).
\end{align*}
\end{corollary}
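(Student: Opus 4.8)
The plan is to derive Corollary~\ref{cor:DSI-HG1} as the $n = 1$ instance of Theorem~\ref{thm:SSI-HG1}, checking that every object appearing in that theorem collapses onto its deterministic counterpart when there is a single block. First I would specialize Algorithm~\ref{alg:SSI-HG}: with $n = 1$ the coefficient $n - 1$ in line~3 vanishes, so $q^k = \nabla_w \phi(w^{k-1},\delta^{k-1})$, and substituting this into line~4 reproduces exactly line~3 of Algorithm~\ref{alg:DSI-HG} (the extrapolation term $\theta(\nabla_w\phi(w^k,\delta^k) - q^k)$ becomes $\theta(\nabla_w\phi(w^k,\delta^k) - \nabla_w\phi(w^{k-1},\delta^{k-1}))$). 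Likewise $[n] = \{1\}$, so the draw $i_k$ is deterministic and line~6 reduces to line~4 of Algorithm~\ref{alg:DSI-HG}. Hence the iterates $\{(w^k,\delta^k)\}$ of SSI-HG with $n = 1$ are precisely those of DSI-HG.

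Next I would identify the auxiliary sequence. In Theorem~\ref{thm:SSI-HG1} the full-dimensional update is $\hat\delta^k = \prox^\tau_g[\delta^{k-1} + \tau \nabla_\delta \phi(w^k,\hat\delta^k)]$, while for $n = 1$ the actual iterate satisfies the same fixed-point relation $\delta^k = \prox^\tau_g[\delta^{k-1} + \tau \nabla_\delta \phi(w^k,\delta^k)]$. Since the implicit step is assumed well-posed (the proximal subproblem has a unique minimizer), we get $\hat\delta^k = \delta^k$, so $\|F(w^k,\hat\delta^k)\| = \|F(w^k,\delta^k)\|$. The expectation in Theorem~\ref{thm:SSI-HG1} is over the random indices $i_0,\dots,i_{K-1}$, which are now deterministic, so $\E$ may be dropped. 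Finally, setting $n = 1$ in the hypotheses of Theorem~\ref{thm:SSI-HG1} turns $0 < \tau \le \frac{1}{6nL}$ into $0 < \tau \le \frac{1}{6L}$ and turns the $\rho$-condition into $0 < \rho < \frac{1}{6\max\{\sigma^{-1}+4L^2\sigma,\ \tau^{-1}+12L^2\tau\}}$, matching the corollary verbatim (with $\theta = 1$), and the conclusion $\frac1K\sum_{k=1}^K \E\|F(w^k,\hat\delta^k)\|^2 = O(1/K)$ becomes $\frac1K\sum_{k=1}^K \|F(w^k,\delta^k)\|^2 = O(1/K)$.

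There is essentially no analytical obstacle: the content of the corollary is already contained in Theorem~\ref{thm:SSI-HG1}, and the only points requiring (minor) care are the simplification of $q^k$ and the identification $\hat\delta^k = \delta^k$, both immediate. Alternatively, if one prefers not to invoke Theorem~\ref{thm:SSI-HG1} as a black box, the one-iteration estimate (Lemma~\ref{lemma:one-iter}) specializes with $n = 1$ --- all sampling/variance terms vanish --- and the same telescoping argument yields the bound directly.
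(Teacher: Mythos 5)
Your proposal is correct and matches the paper's own route: the paper treats Corollary \ref{cor:DSI-HG1} as the $n=1$ specialization of Theorem \ref{thm:SSI-HG1}, with the same observations you make — $q^k$ collapses to $\nabla_w\phi(w^{k-1},\delta^{k-1})$, the sampling becomes trivial so expectations drop, and $\hat\delta^k$ coincides with the iterate $\delta^k$ (the implicit subproblem is strongly convex under the step-size condition, so this identification is sound). Nothing further is needed.
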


\begin{corollary} \label{cor:DSI-HG2}
Suppose Assumptions \ref{assump:1} and \ref{assump:3} are true. Let $\{(w^k,\delta^k)\}$ be the sequence generated by DSI-HG. Let $L = \max\{L_{11}, L_{12}, L_{22}\}$. If $\theta$, $\sigma$, $\tau$ satisfy
\begin{align*}
 0 < \sigma, \tau \leq \frac{1}{3L}, \qquad \theta = \max\left\{ \frac{1}{1 + \mu \sigma}, \frac{1}{1 + \mu \tau} \right\},
\end{align*}
we have
\begin{align*}
\|w^* - w^K\|^2 = O(\theta^K), \qquad \|\delta^* - \delta^K\|^2 = O(\theta^K).
\end{align*}
\end{corollary}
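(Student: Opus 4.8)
The plan is to deduce Corollary \ref{cor:DSI-HG2} directly from Theorem \ref{thm:SSI-HG2} by specializing to $n = 1$. First I would verify that DSI-HG is literally SSI-HG run with $n = 1$: in this case the correction term in line 3 of Algorithm \ref{alg:SSI-HG} collapses to $q^k = \nabla_w \phi(w^{k-1},\delta^{k-1})$ because its $(n-1)$-prefactor vanishes, so line 4 of Algorithm \ref{alg:SSI-HG} becomes exactly line 3 of Algorithm \ref{alg:DSI-HG}; moreover line 5 of Algorithm \ref{alg:SSI-HG} is vacuous (the only index is $i_k = 1$), and line 6 becomes line 4 of Algorithm \ref{alg:DSI-HG} with $g = g_1$ and $\phi = \phi_1$. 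Hence the iterates $\{(w^k,\delta^k)\}$ of the two algorithms coincide, and since the sampling is degenerate every expectation over $i_k$ is trivial, so $\E$ can be dropped throughout.

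Next I would substitute $n = 1$ into the hypotheses and conclusion of Theorem \ref{thm:SSI-HG2}. The step-size bound $0 < \tau \le \tfrac{1}{3nL}$ becomes $0 < \tau \le \tfrac{1}{3L}$, matching the stated condition alongside $0 < \sigma \le \tfrac{1}{3L}$. In the prescribed $\theta$, the $\delta$-block contraction factor $\tfrac{1 + (n-1)\mu\tau/n}{1+\mu\tau}$ reduces to $\tfrac{1}{1+\mu\tau}$ since the coordinate-staleness term $(n-1)\mu\tau/n$ vanishes, so $\theta = \max\{\tfrac{1}{1+\mu\sigma}, \tfrac{1}{1+\mu\tau}\}$ as claimed. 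The conclusion $\E\|w^* - w^K\|^2 = O(\theta^K)$, $\E\|\delta^* - \delta^K\|^2 = O(\theta^K)$ then reads, with the degenerate expectation removed, $\|w^* - w^K\|^2 = O(\theta^K)$ and $\|\delta^* - \delta^K\|^2 = O(\theta^K)$, which is exactly Corollary \ref{cor:DSI-HG2}.

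Since the substantive work lives in the proof of Theorem \ref{thm:SSI-HG2}, the only real obstacle is bookkeeping: confirming that no step of that proof secretly needs $n \ge 2$. The place to watch is the one-iteration estimate (Lemma \ref{lemma:one-iter}), whose $\delta$-block and accompanying Lyapunov/potential function carry $n$-dependent weights arising from the stochastic coordinate update; one must check these remain valid and simplify correctly at $n = 1$, where the potential degenerates to the natural deterministic form $a\|w^* - w^k\|^2 + b\|\delta^* - \delta^k\|^2$ plus a one-step memory term in $\|w^k - w^{k-1}\|^2$. Granting that, strong monotonicity from Assumption \ref{assump:3} forces this potential to contract by the factor $\theta$ each iteration, and a one-line induction yields the geometric rate. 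Alternatively, if a self-contained argument is preferred, I would reprove Lemma \ref{lemma:one-iter} directly for $n = 1$ — shorter, since there is no conditional expectation to take — and then iterate the contraction.
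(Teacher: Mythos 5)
Your proposal is correct and matches the paper's own treatment: the paper gives no separate proof of Corollary \ref{cor:DSI-HG2}, deriving it exactly as you do by observing that DSI-HG is SSI-HG with $n = 1$ (so the sampling is degenerate and all expectations are trivial) and substituting $n = 1$ into the step-size condition and the formula for $\theta$ in Theorem \ref{thm:SSI-HG2}. Your added check that Lemma \ref{lemma:one-iter} and the supporting estimates remain valid at $n = 1$ is the right bookkeeping and goes through without change.
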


We again remark that we have simplified the parameter conditions for readability of the Corollaries. The general forms are written in Appendix \ref{append:corollaries}.

\subsection{Applying SI-HGs to AT} \label{sec:SI-HG-AT}

We now extend our theoretical intuition to the AT setting. This section is inspired by works which combine theoretically established algorithms with practical algorithms or use heuristics to extend algorithms to deep learning settings \citep{das2018,gidel2019,mert2019,nouiehed2019,chav2020,sinha2018,wang2019}.

Denote $\B_i \coloneqq \{\delta_i' : \|\delta_i'\|_\infty \leq \epsilon\}$. The AT problem (\ref{eq:AT_intro}) can be written as
\begin{align} \label{eq:AT}
\min_w \max_{\delta} \sum_{i = 1}^n \phi_i(w,\delta_i) - \I_{\B_i}(\delta_i)
\end{align}
where
\begin{align*}
\phi_i(w,\delta_i) = \frac{1}{n} \ell(x_i + \delta_i, y_i, w).
\end{align*}
When the dimension of $\delta$ or $w$ is large, i.e., the dataset is large or the DNN has a lot of parameters, it may be difficult to directly apply SSI-HG or DSI-HG to the AT problem. Hence, we propose minibatch SI-HG (MSI-HG, Algorithm \ref{alg:MSI-HG} in Appendix \ref{append:pseudocodes}), which is a combination of the minibatch gradient method \citep{bottou2016} and DSI-HG. We also develop a heuristic for solving the implicit step in SI-HGs.

% \begin{figure*}[h!]
% \centering
% \includegraphics[width=\linewidth]{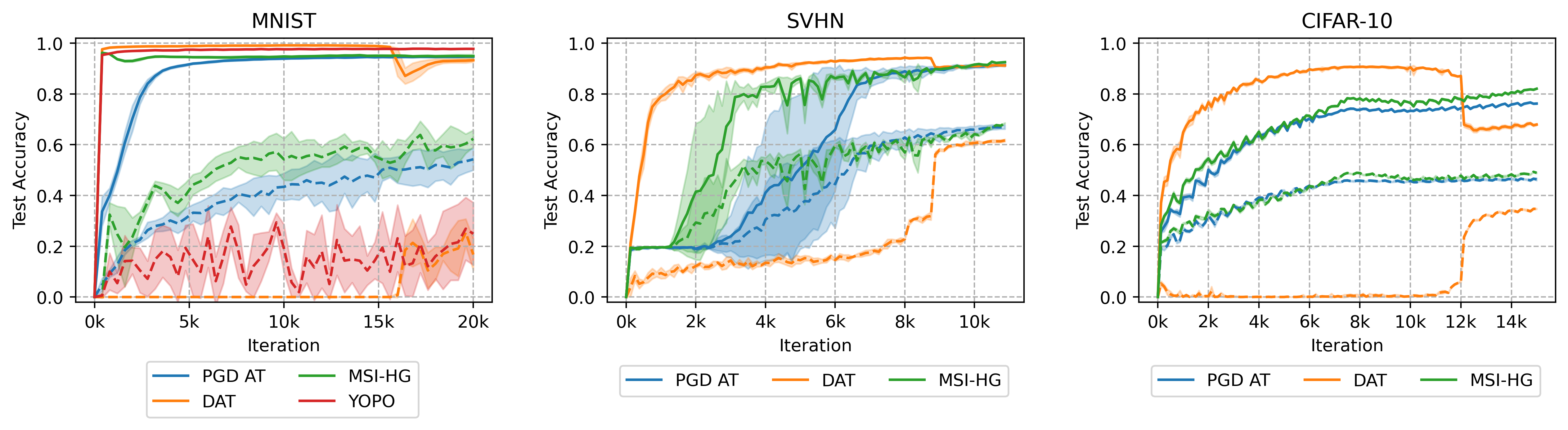}
% \includegraphics[width=\linewidth]{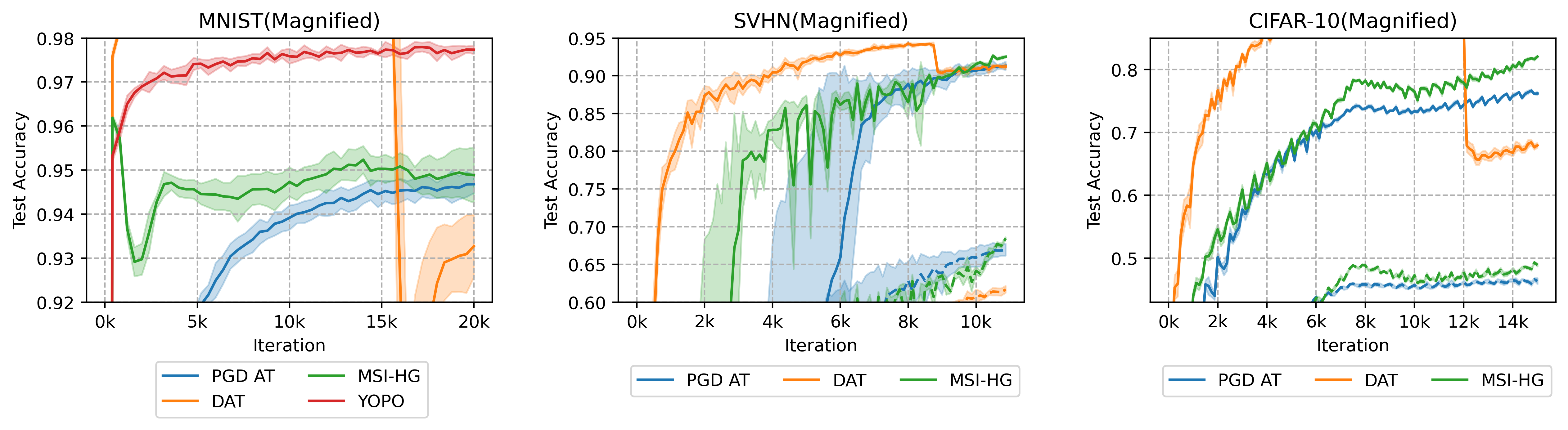}
% \caption{Learning curves of AT algorithms. Solid line denotes natural accuracy and dotted line denotes accuracy on adversarial examples generated by PGD-20. On SVHN, the natural accuracy curves for PGD AT and DAT overlap near the end.}
% \label{fig:exp}
% \end{figure*}

% \begin{figure*}[h!]
% \centering
% \includegraphics[width=\linewidth]{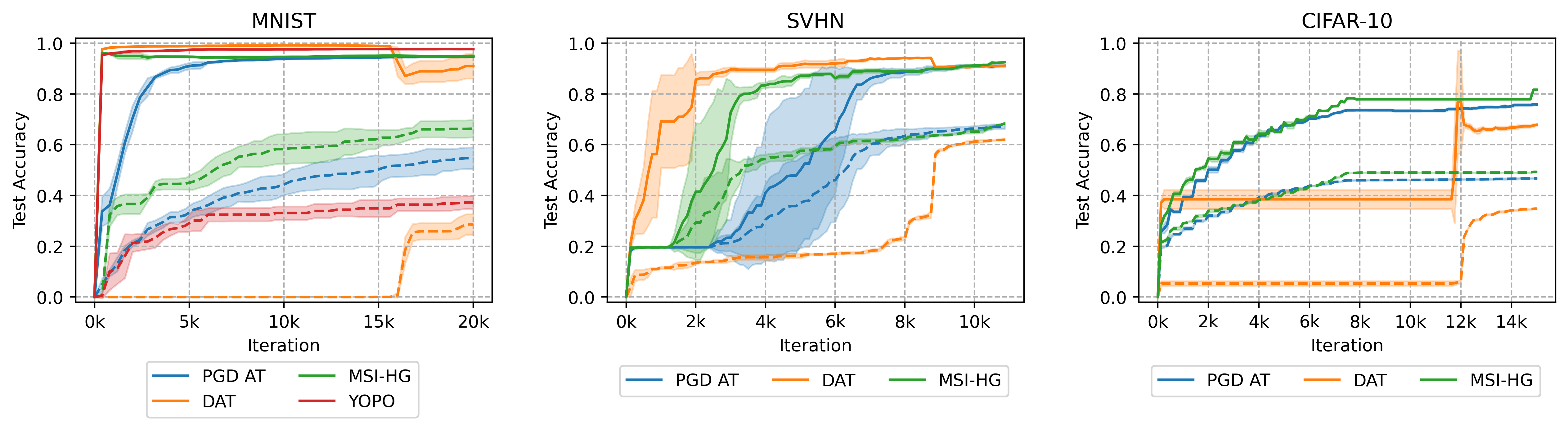}
% \includegraphics[width=\linewidth]{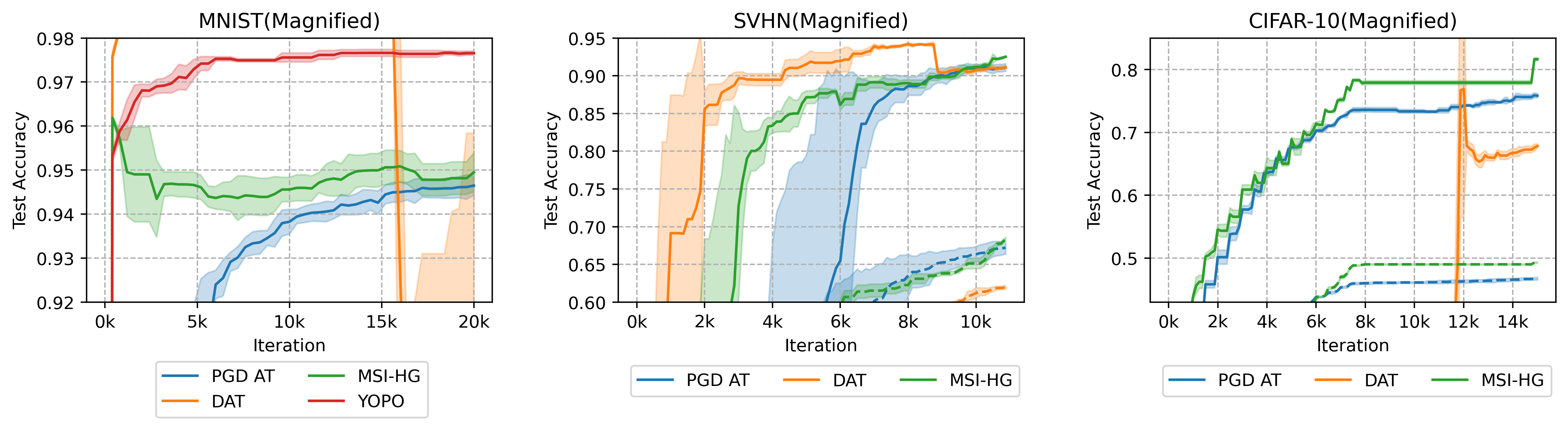}
% \caption{Processed learning curves of AT algorithms. At iteration $K$, we plot the natural and robust accuracies of the iteration $\leq K$ which had the best robust accuracy. Solid line denotes natural accuracy and dotted line denotes accuracy on adversarial examples generated by PGD-20. On SVHN, the natural accuracy curves for PGD AT and DAT overlap near the end.}
% \label{fig:exp_best}
% \end{figure*}

\begin{table*}[h!]
\caption{Accuracy (\%) on natural and adversarial examples at the final iteration.}
\label{table:exp_stat}
\centering
% \vspace{1.0em}
\resizebox{\textwidth}{!}{
\begin{tabular}{c c c c c c c c c c}
\toprule
\multirow{2}{4em}{\textbf{Method}} & \multicolumn{3}{c}{\textbf{MNIST}} & \multicolumn{3}{c}{\textbf{SVHN}} & \multicolumn{3}{c}{\textbf{CIFAR-10}} \\
\cmidrule(lr){2-4} \cmidrule(lr){5-7} \cmidrule(lr){8-10}
 & Natural & PGD-20 & PGD-50-10 & Natural & PGD-20 & PGD-50-10 & Natural & PGD-20 & PGD-50-10 \\
\cmidrule{1-10}
PGD AT & $94.68_{\pm 0.21}$ & $54.24_{\pm 4.27}$ & $44.30_{\pm 4.60}$ & $91.31_{\pm 0.55}$ & $66.94_{\pm 0.80}$ & $66.40_{\pm 0.83}$ & $76.19_{\pm 0.25}$ & $46.35_{\pm 0.49}$ & $45.84_{\pm 0.46}$ \\
DAT & $93.27_{\pm 0.71}$ & $16.73_{\pm 6.17}$ & $10.00_{\pm 3.87}$ & $91.23_{\pm 0.38}$ & $61.68_{\pm 0.46}$ & $61.04_{\pm 0.52}$ & $67.93_{\pm 0.39}$ & $34.72_{\pm 0.18}$ & $34.18_{\pm 0.18}$ \\ 
YOPO & $\mathbf{97.73}_{\pm 0.09}$ & $24.92_{\pm 12.29}$ & $14.63_{\pm 10.09}$ & $89.87_{\pm 0.52}$ & $46.35_{\pm 2.20}$ & $44.71_{\pm 2.24}$  & \textbf{83.99} & 44.72 & --- \\
\cmidrule{1-10}
% SSI-HG (Ours) & 95.14 & 55.18 & 40.24 & --- & --- & --- & --- & --- & --- \\ 
MSI-HG (Ours) & $94.89_{\pm 0.63}$ & $\mathbf{62.34_{\pm 3.33}}$ & $\mathbf{44.51_{\pm 2.04}}$ & $\mathbf{92.52_{\pm 0.11}}$ & $\mathbf{68.44_{\pm 0.15}}$ & $\mathbf{67.86_{\pm 0.15}}$ & $82.04_{\pm 0.13}$ & $\mathbf{48.91_{\pm 0.23}}$ & $\mathbf{48.27_{\pm 0.20}}$ \\
\bottomrule
\end{tabular}}
\end{table*}

\begin{table*}[h!]
\caption{Accuracy (\%) on natural and adversarial examples at the moment of best robust accuracy. We do not report accuracies on adversarial examples generated by PGD-50-10, as it was too expensive to run PGD-50-10 at every iteration of training.}
\label{table:exp_best_stat}
\centering
% \vspace{1.0em}
\resizebox{0.8\textwidth}{!}{
\begin{tabular}{c c c c c c c}
\toprule
\multirow{2}{4em}{\textbf{Method}} & \multicolumn{2}{c}{\textbf{MNIST}} & \multicolumn{2}{c}{\textbf{SVHN}} & \multicolumn{2}{c}{\textbf{CIFAR-10}} \\
\cmidrule(lr){2-3} \cmidrule(lr){4-5} \cmidrule(lr){6-7}
& Natural & PGD-20 & Natural & PGD-20 & Natural & PGD-20 \\
\cmidrule{1-7}
PGD AT & $94.65_{\pm 0.22}$ & $54.71_{\pm 4.23}$ & $91.09_{\pm 0.50}$ & $67.21_{\pm 0.88}$ & $75.83_{\pm 0.36}$ & $46.73_{\pm 0.21}$ \\
DAT & $90.96_{\pm 4.88}$ & $28.58_{\pm 4.04}$ & $91.08_{\pm 0.21}$ & $61.94_{\pm 0.26}$ & $67.83_{\pm 0.35}$ & $34.85_{\pm 0.11}$ \\
YOPO & $\mathbf{97.66}_{\pm 0.05}$ & $37.23_{\pm 2.48}$ & $89.54_{\pm 0.47}$ & $48.19_{\pm 0.84}$ & --- & --- \\
\cmidrule{1-7}
% SSI-HG (Ours) & 95.14 & 59.31 & --- & --- & --- & --- \\ 
MSI-HG (Ours) & $94.95_{\pm 0.44}$ & $\mathbf{66.31_{\pm 3.41}}$ & $\mathbf{92.52_{\pm 0.11}}$ & $\mathbf{68.44_{\pm 0.15}}$ & $\mathbf{81.63_{\pm 0.23}}$ & $\mathbf{49.30_{\pm 0.11}}$ \\ 
\bottomrule
\end{tabular}}
\end{table*}

Under the AT setting (\ref{eq:AT}), the minimization form of the implicit step (\ref{eq:implicit}) becomes
\begin{align} \label{eq:AT_implicit}
\delta^{k + 1}_{i_k} = \argmin_{\delta_{i_k} \in \B_{i_k}} -\phi_{i_k}(w^{k + 1},\delta_{i_k}) + \frac{1}{2\tau} \|\delta_{i_k} - \delta^k_{i_k}\|^2.
\end{align}
Following the intuition behind Section 4 of the work by Goodfellow et al. \citep{goodfellow2015}, we would like to use sign gradient to update $\delta$.
% Following the intuitions behind Section 4 of the work by Goodfellow et al. \citep{goodfellow2015} and Section 2 of the work by M\k{a}dry et al. \citep{madry2018}, we would like to use sign gradient to update $\delta$.
Sign gradient is used in many other AT methods as well \citep{wang2019,zhang2019,madry2018,trades}.
However, the quadratic penalty in (\ref{eq:AT_implicit}) may be incompatible with sign gradient. To circumvent this problem, we interpret (\ref{eq:AT_implicit}) as searching for an adversarial perturbation within the proximity of $\delta^k_{i_k}$. Previous works use the infinity norm to measure the distance between perturbation vectors \citep{yao2019,pooladian2020}. Hence, we solve the surrogate problem
\begin{align}
\delta^{k + 1}_{i_k} = \argmin_{\delta_{i_k} \in \S_{i_k}} -\phi_{i_k}(w^{k + 1},\delta_{i_k})
\end{align}
where
\begin{align}
\S_{i_k} = \B_{i_k} \cap \{\delta_{i_k} : \| \delta_{i_k} - \delta^k_{i_k} \|_\infty \leq \tau \}
\end{align}
with $T$ steps of PGD:
\begin{align}
\delta^{k,0}_{i_k} = \delta^k_{i_k} + \unif[-\tau,\tau]^{d_{i_k}}
\end{align}
and
\begin{align} \label{eq:AT_pgd}
\delta^{k,t + 1}_{i_k} = \Pi_{\S_{i_k}} [ \delta^{k,t}_{i_k} + \eta \cdot \sign(\nabla_{\delta_{i_k}} \phi_{i_k}(w^{k + 1},\delta^{k,t}_{i_k})) ]
\end{align}
for $t = 0, \ldots, T - 1$, and $\delta^{k + 1}_{i_k} = \delta^{k,T}_{i_k}$. $\tau$ and $\eta$ are hyperparameters.

\section{EXPERIMENTS} \label{sec:exp}

\begin{figure*}[h!]


\centering
\includegraphics[width=\linewidth]{figures/comp_err.png}
\includegraphics[width=\linewidth]{figures/comp_mag_err.png}
\caption{Learning curves of AT algorithms. Solid line denotes natural accuracy and dotted line denotes accuracy on adversarial examples generated by PGD-20. On SVHN, the natural accuracy curves for PGD AT and DAT overlap near the end.}
\label{fig:exp}
\end{figure*}

\begin{figure*}[h!]


\centering
\includegraphics[width=\linewidth]{figures/comp_best_err.png}
\includegraphics[width=\linewidth]{figures/comp_best_mag_err.png}
\caption{Processed learning curves of AT algorithms. At iteration $K$, we plot the natural and robust accuracies of the iteration $\leq K$ which had the best robust accuracy. Solid line denotes natural accuracy and dotted line denotes accuracy on adversarial examples generated by PGD-20. On SVHN, the natural accuracy curves for PGD AT and DAT overlap near the end.}
\label{fig:exp_best}
\end{figure*}

We run AT on  MNIST \citep{mnist}, SVHN \citep{svhn}, and CIFAR-10 \citep{cifar10}. We denote the $T$-step PGD attack with $R$ restarts by PGD-$T$-$R$. If $R = 0$, we omit $R$ from the name. The baseline methods are AT methods which have convergence guarantees: PGD AT, DAT, and YOPO. Following common practice, we set the number of inner iterations in PGD AT to 10 \citep{co2020,wang2019,zhang2019}. For SI-HGs, we set $\theta = 1$ and use the smallest choices of $\tau \in (0,2\epsilon)$ and $T \in \{2, 3, 5, 10\}$ which lead to convergence without harming the robustness. For PGD, DAT, and MSI-HG, the $\delta$ update step size $\eta$ is always set to $2.5\epsilon / T$ so $\delta$ may reach and move around on the boundary of the constraint set. For other hyperparameters, we use the recommended settings. All learning curves and statistics are produced by averaging over five random trials.

\textbf{Training settings.} We use a four-layer DNN (3 conv. layers with channels 16, 32, 64, and a final linear layer) on MNIST with $\epsilon = 0.4$, PreActResNet-8 \citep{he2016} for SVHN with $\epsilon = 4 / 255$, and PreActResNet-18 on CIFAR-10 with $\epsilon = 8 / 255$. We use a single A100-SXM4-40GB to train each model. We use batch size 150 on MNIST and batch size 100 on SVHN and CIFAR-10. On each dataset, we combine each AT algorithm with SGD with momentum 0.9 plot its learning curve\footnote{In Appendix \ref{append:pseudocodes}, we show how MSI-HG is combined with GD or SGD with momentum.}. On MNIST, each methods is run for 50 epochs. SGD uses a constant learning rate 0.01. On SVHN, each method is run for 15 epochs. SGD uses a triangular learning rate \citep{smith2017} which starts at zero, peaks at epoch 5 with value 0.2, and decays back to zero\footnote{We remark that triangular learning rates have been used in recent works to reduce the training time of AT by up to hundred orders of magnitude \citep{co2020,wong2020}.}.
On CIFAR-10, each method is run for 30 epochs. SGD uses a cyclic learning rate which starts at zero, peaks at epoch 5 with value 0.2, decays to zero at epoch 15, peaks at epoch 20 with value 0.02, and decays to zero at epoch 30.
We found that YOPO did not converge under the triangular and cyclic learning rate schedules. Hence, we follow the settings recommended by the authors for YOPO on SVHN experiments. On CIFAR-10, we reuse the best accuracies reported by the authors of YOPO (c.f. Table 2 in \citep{zhang2019}). Further training settings are described in Appendix \ref{append:settings}.

\textbf{Experiment results.} In Figure \ref{fig:exp}, we plot the learning curves of AT algorithms.
% On all datasets, SI-HGs converge faster to robust DNNs than the baseline methods.
On all datasets, we see acceleration for MSI-HG.
For YOPO and DAT, the natural accuracies rise faster than those of MSI-HG, but robust accuracies grow extremely slowly. This is undesirable, since our goal is \textit{faster convergence to adversarially robust DNNs}. On MNIST and SVHN, MSI-HG shows acceleration during the first half of the training process. If we examine the magnified learning curves on CIFAR-10, MSI-HG shows faster convergence than PGD AT at iterations 6k to 8k and 10k to 15k. As a result, the accuracies of PGD AT at epoch 30 is lower than those of MSI-HG at epoch 15. If we process the learning curves to reduce oscillations (Figure \ref{fig:exp_best}), acceleration for MSI-HG is even more evident. We note that MSI-HG uses at most ten steps in the inner loop, so its computational cost is similar to or better than that of PGD AT.

We also report natural and robust accuracies in Tables \ref{table:exp_stat} and \ref{table:exp_best_stat}. MSI-HG achieve better robustness than the baseline methods on all datasets\footnote{On CIFAR-10, to generate PGD-20 adversarial examples, the authors of YOPO use $\eta = 2 / 255$ while we use $\eta = 1 / 255$. However, PGD-50-10 accuracy for MSI-HG is higher than PGD-20 accuracy for YOPO. Thus, PreActResNet-18 trained by MSI-HG is indeed more robust than those trained by YOPO.}. YOPO achieves higher natural accuracy at the cost of lower robustness. On MNIST, robustness for YOPO is especially poor. MSI-HG beat PGD AT and DAT in terms of both natural and robust accuracies.

\section{A DISCUSSION ON THE LIMITATIONS OF OUR WORK} \label{sec:limitation}

\subsection{Theoretical Assumptions}

Here, we discuss and attempt to justify the theoretical assumptions used in our work.

\textbf{Assumption \ref{assump:1}.} Assumption \ref{assump:1} is generally false in the deep learning setting. For instance, the assumption that the coupling function $\phi$ has Lipschitz continuous gradients is false when we train DNNs which use non-differentiable operations such as ReLU or max-pooling. Still, many works show that algorithms with such theoretical results perform surprisingly well in the deep learning setting \citep{das2018,gidel2019,mert2019,nouiehed2019,chav2020,wang2019,attention}. We speculate that this is because the assumptions hold approximately or locally when we train DNNs\footnote{Wang et al. \citep{wang2019} also considers this perspective.}. For example, a recent work has shown that semi-smoothness, an approximate version of smoothness, holds for overparametrized ReLU DNNs \citep{zhu2018}. Thus, even if we have to work under some restrictive assumptions, it is crucial that we continue to develop theoretical grounds for AT methods.

\textbf{Assumption \ref{assump:2}.} The relation between MVI and AT is a non-trivial research topic by itself, but we try our best to justify the MVI condition here. First, the MVI condition is already weaker than other assumptions such as pseudo-monotonicity, monotonicity, or coherence \citep{mert2019}. In fact, we use the even weaker weak MVI condition. Second, algorithms developed under the MVI condition have shown good performance when applied to deep learning \citep{gidel2019,mert2019,liu2020gan}. Finally, \citet{liu2020gan} has pointed out that the MVI condition holds while using SGD to learn neural nets for minimization. As nonconvex-nonconcave minimax optimization is in general intractable \citep{wmvi2021}, we believe the MVI condition is an adequate choice to develop new AT algorithms in a principled manner.

\subsection{Per-Iteration Costs}

Due to the full-gradient update in $w$, the per-iteration cost for SSI-HG is larger than per-iteration costs of doubly-stochastic methods. Hence, it is necessary to establish theoretical results for methods such as MSI-HG. Based on the experiments, we cautiously conjecture that MSI-HG also has a rate better than $O(1/K^{1/2})$. It may be possible to use ideas from our proofs for SSI-HG and DSI-HG to prove the convergence of MSI-HG, but we leave this for future work.

\subsection{Memorization of $\delta$}

Although we introduced hybrid gradient methods and PGD AT / DAT / YOPO as the same class of algorithm (MGDA), they are different in the aspect that PGD AT / DAT / YOPO does not memorize $\delta$ but hybrid gradient methods do. This is because PGD AT / DAT / YOPO randomly initializes $\delta$ at every iteration. This may seem like a drawback of the hybrid gradient methods, yet this memory of $\delta$ is what allows us to apply the momentum-like update (line 3 of Algorithms \ref{alg:SSI-HG} and \ref{alg:DSI-HG}) and thus obtain better convergence rates. This intuition is reflected in the faster convergence of MSI-HG in the adversarial training (AT) experiments.

\section{CONCLUSION} \label{sec:conclusion}

In this work, we introduced SI-HGs to solve nonconvex-nonconcave minimax problems separable in the maximization variable. We proved that SI-HGs achieve the convergence rate $O(1/K)$ which improves upon the convergence rate $O(1/K^{1/2})$ of YOPO and DAT. Our work also improved upon previous convergence results for MGDA. Experiments showed that a practical variant of SI-HGs indeed converges faster and achieves better robustness than other AT methods. Finally, we discussed the limitations of our work, and proposed future directions of research.

We generally expect positive outcomes from this work, since robustness and efficiency are desirable properties of machine learning systems. Adversarially robust DNNs are more likely to be resistant to malicious input manipulations than their naturally trained counterparts. Algorithms which converge fast consume less resource than other algorithms with similar per-iteration costs.

{\small

\bibliography{references}
\bibliographystyle{bst}}

%%%%%%%%%%%%%%%%%%%%%%%%%%%%%%%%%%%
%%%%%% SUPPLEMENT (OPTIONAL) %%%%%%
%%%%%%%%%%%%%%%%%%%%%%%%%%%%%%%%%%%

\clearpage
\appendix

\thispagestyle{empty}

% For one-column format, uncomment the following:
\onecolumn \makesupplementtitle
% For two-column format, uncomment the following:
%\twocolumn[ \makesupplementtitle ]

\section{PSEUDOCODES} \label{append:pseudocodes}

% \begin{algorithm}[H]
% \caption{\texttt{GD}}
% \begin{algorithmic}[1]
% \State \textbf{Input:} Gradient $\nabla$, momentum parameter $\rho$, timestep $t$.
% \State If $t = 1$, initialize $m_0 = \mathbf{0}$.
% \State Update momentum buffer: $m_t = \rho m_{t - 1} + \nabla$
% \State \textbf{Return} $m_t$
% \end{algorithmic}
% \end{algorithm}

% \begin{algorithm}[H]
% \caption{\texttt{SSI-HG+GD}}
% \label{alg:SSI-HG+GD}
% \begin{algorithmic}[1]
% \State \textbf{Input:} $(w^{-1},\delta^{-1}) = (w^0,\delta^0)$, $\sigma$, $\tau$, $\rho$.
% \For {$k = 0, 1, 2, \ldots$}
% \State $q^k = \nabla_w \phi(w^{k - 1},\delta^{k - 1}) - (n - 1)\{ \nabla_w \phi(w^k,\delta^k) - \nabla_w \phi(w^k,\delta^{k - 1}) \}$
% \State $w^{k + 1} = w^k - \sigma \cdot \texttt{GD}[2 \cdot \nabla_w \phi(w^k,\delta^k) - q^k, \rho, k + 1]$
% \State Draw $i_k \in [n]$ uniformly at random.
% \State $\delta^{k + 1}_{i_k} = \Pi_{\B_{i_k}} [\delta^k_{i_k} + \tau \nabla_{\delta_{i_k}} \phi_{i_k}(w^{k + 1},\delta^{k + 1}_{i_k})]$ and $\delta^{k + 1}_i = \delta^k_i$ for all $i \neq i_k$
% \EndFor
% \end{algorithmic}
% \end{algorithm}

\begin{algorithm}[H]
\caption{\texttt{MSI-HG}}
\label{alg:MSI-HG}
\begin{algorithmic}[1]
\State \textbf{Input:} $(w^{-1},\delta^{-1}) = (w^0,\delta^0)$, $\sigma$, $\tau$.
\For {$k = 0, 1, 2, \ldots$}
\State{Jointly shuffle the entries of $\delta$ and $\phi$.}
\For {$i = 1, \ldots, n$}
\State $\delta^{nk + i}_i = \Pi_{\B_i} [\delta^{nk + i - 1}_i + \tau \nabla_{\delta_i} \phi_i(w^{nk + i - 1},\delta^{nk + i}_i)]$
\State $\delta^{nk + i}_j = \delta^{nk + i - 1}_j$ for all $j \neq i$
\State $w^{nk + i} = w^{nk + i - 1} - \sigma \{ 2 \nabla_w \phi_i(w^{nk + i - 1},\delta^{nk + i}_i) - \nabla_w \phi_{i - 1}(w^{nk + i - 2},\delta^{nk + i - 1}_{i - 1}) \}$
\EndFor
\EndFor
\end{algorithmic}
\end{algorithm}

\begin{algorithm}[H]
\caption{\texttt{MSI-HG+GD}}
\label{alg:MSI-HG+GD}
\begin{algorithmic}[1]
\State \textbf{Input:} $(w^{-1},\delta^{-1}) = (w^0,\delta^0)$, $\sigma$, $\tau$, $\rho$.
\For {$k = 0, 1, 2, \ldots$}
\State{Jointly shuffle the entries of $\delta$ and $\phi$.}
\For {$i = 1, \ldots, n$}
\State $\delta^{nk + i}_i = \Pi_{\B_i} [\delta^{nk + i - 1}_i + \tau \nabla_{\delta_i} \phi_i(w^{nk + i - 1},\delta^{nk + i}_i)]$
\State $\delta^{nk + i}_j = \delta^{nk + i - 1}_j$ for all $j \neq i$
\State $\nabla_w^{nk + i - 1} = 2 \nabla_w \phi_i(w^{nk + i - 1},\delta^{nk + i}_i) - \nabla_w \phi_{i - 1}(w^{nk + i - 2},\delta^{nk + i - 1}_{i - 1})$
\State $w^{nk + i} = w^{nk + i - 1} - \sigma \cdot \texttt{GD}[\nabla_w^{nk + i - 1}, \rho, nk + i]$
\EndFor
\EndFor
\end{algorithmic}
\end{algorithm}

We also use $T$ steps of PGD to approximate line 5 in Algorithms \ref{alg:MSI-HG} and \ref{alg:MSI-HG+GD}.

\newpage

\section{MISSING PROOFS} \label{append:proofs}

\subsection{Proof of the equivalence between the implicit step and Equation (\ref{eq:implicit})} \label{append:equiv}

By the definition of the proximal operator, line 6 of SSI-HG (Algorithm \ref{alg:SSI-HG}) is
\begin{align} \label{eq:8}
\delta^{k + 1}_{i_k} = \argmin_{\delta_{i_k}} g_{i_k}(\delta_{i_k}) + \frac{1}{2} \|\delta_{i_k} - \{\delta^k_{i_k} + \tau_k \nabla_{\delta_{i_k}} \phi_{i_k}(w^{k + 1},\delta^{k + 1}_{i_k})\}\|^2_{\tau^{-1}}
\end{align}
The optimality condition of (\ref{eq:8}) is
\begin{align}
0 \in \partial g_{i_k}(\delta^{k + 1}_{i_k}) + \tau^{-1} [\delta^{k + 1}_{i_k} - \{\delta^k_{i_k} + \tau_k \nabla_{\delta_{i_k}} \phi_{i_k}(w^{k + 1},\delta^{k + 1}_{i_k})\}]
\end{align}
which is equivalent to
\begin{align}
0 \in \partial g_{i_k}(\delta^{k + 1}_{i_k}) - \nabla_{\delta_{i_k}} \phi_{i_k}(w^{k + 1},\delta^{k + 1}_{i_k}) + \tau^{-1} (\delta^{k + 1}_{i_k} - \delta^k_{i_k}).
\end{align}
This is the optimality condition for (\ref{eq:implicit}).

\subsection{One-iteration result for SSI-HG}

Define for $k \geq 0$, the filtration and the conditional expectation
\begin{align*}
\F_0 = \emptyset, \qquad \F_k = \{i_0, \ldots, i_{k - 1}\}, \qquad \E_k = \E[ \cdot \mid \F_k].
\end{align*}
We also define the following representation of SSI-HG with full dimensional updates
\begin{align*}
q^k &= \nabla_w \phi(w^{k - 1},\delta^{k - 1}) - (n - 1)\{ \nabla_w \phi(w^k,\delta^k) - \nabla_w \phi(w^k,\delta^{k - 1}) \}, \\
w^{k + 1} &= \prox^{\sigma_k}_f [w^k - \sigma \{ \nabla_w \phi(w^k,\delta^k) + \theta_k (\nabla_w \phi(w^k,\delta^k) - q^k) \} ], \\
\hat{\delta}^{k + 1}_i &= \prox^{\tau}_{g_i} [\delta^k_i + \tau \nabla_{\delta_i} \phi_i(w^{k + 1},\hat{\delta}^{k + 1}_i)].
\end{align*}
The optimality conditions for $w^{k + 1}$ and $\hat{\delta}^{k + 1}_i$ are
\begin{align}
0 &= \gamma^{k + 1}_f + \sigma^{-1}[w^{k + 1} - w^k + \sigma\{ \nabla_w \phi(w^k,\delta^k) + \theta (\nabla_w \phi(w^k,\delta^k) - q^k) \}] \label{eq:opt1} \\
0 &= \hat{\gamma}^{k + 1}_{g_i} + \tau^{-1}(\hat{\delta}^{k + 1}_i - \delta^k_i - \tau \nabla_{\delta_i} \phi_i(w^{k + 1},\hat{\delta}^{k + 1}_i)) \label{eq:opt2}
\end{align}
for some $\gamma_f^{k + 1} \in \partial f(w^{k + 1})$ and $\hat{\gamma}^{k + 1}_{g_i} \in \partial g_i(\hat{\delta}^{k + 1}_i)$. Note that
\begin{align*}
\hat{\gamma}^{k + 1}_g \coloneqq (\hat{\gamma}^{k + 1}_{g_1}, \ldots, \hat{\gamma}^{k + 1}_{g_n}) \in \partial g(\hat{\delta}^{k + 1})
\end{align*}
so from (\ref{eq:opt2}),
\begin{align}
0 = \hat{\gamma}^{k + 1}_g + \tau^{-1}(\hat{\delta}^{k + 1} - \delta^k - \tau \nabla_\delta \phi(w^{k + 1},\hat{\delta}^{k + 1})). \label{eq:opt3}
\end{align}
We also note that the optimality condition for $\delta^{k + 1}_{i_k}$ is
\begin{align}
0 &= \gamma^{k + 1}_{g_{i_k}} + \tau^{-1}(\delta^{k + 1}_{i_k} - \delta^k_{i_k} - \tau \nabla_{\delta_{i_k}} \phi_{i_k}(w^{k + 1},\delta^{k + 1}_{i_k})) \label{eq:opt4}
\end{align}
for some $\gamma^{k + 1}_{g_{i_k}} \in \partial g_{i_k}(\delta^{k + 1}_{i_k})$. We start with some technical Lemmas.

\begin{lemma}
For any $\delta \in \R^{d_1} \times \cdots \times \R^{d_n}$ and any function $r(\delta) = \sum_{i = 1}^n r_i(\delta_i)$,
\begin{align}
r(\hat{\delta}^{k + 1}) &= \E_k r(\delta^{k + 1}) + (n - 1)\{ \E_k r(\delta^{k + 1}) - r(\delta^k) \} \label{eq:e1} \\
\|\delta - \hat{\delta}^{k + 1}\|^2_{\tau^{-1}} &=  n \E_k \|\delta - \delta^{k + 1}\|^2_{\tau^{-1}} - n \|\delta - \delta^k\|^2_{\tau^{-1}} + \|\delta - \delta^k\|^2_{\tau^{-1}}, \label{eq:e2} \\
\|\hat{\delta}^{k + 1} - \delta^k\|^2_{\tau^{-1}} &= n \E_k \|\delta^{k + 1} - \delta^k\|^2_{\tau^{-1}}. \label{eq:e3}
\end{align}
\end{lemma}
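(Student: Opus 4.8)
The plan is to reduce all three identities to one coordinate-wise observation and then get (\ref{eq:e2}) and (\ref{eq:e3}) as specializations of (\ref{eq:e1}). The key point is that the data defining the update of coordinate $i$ — namely $\delta^k_i$ and $w^{k+1}$ — does not depend on the random draw $i_k$: the iterate $w^{k+1}$ is computed in line 4 of Algorithm \ref{alg:SSI-HG} from $w^k, w^{k-1}, \delta^k, \delta^{k-1}$, so it is $\F_k$-measurable, and hence so is $\hat\delta^{k+1}$. On the event $\{i_k = i\}$, the component $\delta^{k+1}_i$ solves the same implicit fixed-point equation $z = \prox^\tau_{g_i}[\delta^k_i + \tau \nabla_{\delta_i}\phi_i(w^{k+1},z)]$ as $\hat\delta^{k+1}_i$; since $\prox^\tau_{g_i}$ is nonexpansive and $z \mapsto \tau\nabla_{\delta_i}\phi_i(w^{k+1},z)$ is $\tau L_{22}$-Lipschitz with $\tau L_{22} < 1$ under the step-size conditions, this map is a contraction, so its fixed point is unique and therefore $\delta^{k+1}_i = \hat\delta^{k+1}_i$ on $\{i_k = i\}$. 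On $\{i_k \neq i\}$ we have $\delta^{k+1}_i = \delta^k_i$ by construction. Hence, conditionally on $\F_k$, each coordinate satisfies $\delta^{k+1}_i = \hat\delta^{k+1}_i$ with probability $1/n$ and $\delta^{k+1}_i = \delta^k_i$ with probability $(n-1)/n$.

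Given this, I would prove (\ref{eq:e1}) by separability and linearity of conditional expectation: $\E_k r(\delta^{k+1}) = \sum_{i=1}^n \E_k r_i(\delta^{k+1}_i) = \sum_{i=1}^n \big( \tfrac1n r_i(\hat\delta^{k+1}_i) + \tfrac{n-1}{n} r_i(\delta^k_i) \big) = \tfrac1n r(\hat\delta^{k+1}) + \tfrac{n-1}{n} r(\delta^k)$. Rearranging gives $r(\hat\delta^{k+1}) = n\E_k r(\delta^{k+1}) - (n-1) r(\delta^k) = \E_k r(\delta^{k+1}) + (n-1)\{\E_k r(\delta^{k+1}) - r(\delta^k)\}$, which is (\ref{eq:e1}). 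Note that no convexity of the $r_i$ is needed here; convexity is only used (via $g_i$) in the contraction/uniqueness step above.

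Then (\ref{eq:e2}) follows by applying (\ref{eq:e1}) with the choice $r_i(z) = \tau^{-1}\|\delta_i - z\|^2$, so that $r(\cdot) = \|\delta - \cdot\|^2_{\tau^{-1}}$: this yields $\|\delta - \hat\delta^{k+1}\|^2_{\tau^{-1}} = n\E_k\|\delta - \delta^{k+1}\|^2_{\tau^{-1}} - (n-1)\|\delta - \delta^k\|^2_{\tau^{-1}}$, and rewriting $-(n-1) = -n + 1$ gives the stated form. Identity (\ref{eq:e3}) follows by applying (\ref{eq:e1}) with $r_i(z) = \tau^{-1}\|z - \delta^k_i\|^2$ (the $\F_k$-measurable $\delta^k$ playing the role of a fixed parameter), since then $r(\delta^k) = 0$ and we get $\|\hat\delta^{k+1} - \delta^k\|^2_{\tau^{-1}} = n\E_k\|\delta^{k+1} - \delta^k\|^2_{\tau^{-1}}$; alternatively, one may argue directly that $\|\delta^{k+1} - \delta^k\|^2 = \|\delta^{k+1}_{i_k} - \delta^k_{i_k}\|^2$ because only coordinate $i_k$ moves, and take $\E_k$.

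The only genuinely delicate step is the measurability-plus-uniqueness argument showing $\delta^{k+1}_i = \hat\delta^{k+1}_i$ on $\{i_k = i\}$; once that two-point description of the conditional law of each coordinate is established, the rest is routine bookkeeping with linearity of conditional expectation and the separable structure $r = \sum_i r_i$.
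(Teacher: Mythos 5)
Your proposal is correct and takes essentially the same route as the paper: the coordinate-wise identity $\E_k r_i(\delta^{k+1}_i) = \tfrac{1}{n} r_i(\hat{\delta}^{k+1}_i) + (1 - \tfrac{1}{n}) r_i(\delta^k_i)$, summed over $i$ to get (\ref{eq:e1}), then specialized to the quadratic choices of $r$ to get (\ref{eq:e2}) and (\ref{eq:e3}). The only addition is your explicit justification (via $\F_k$-measurability of $w^{k+1},\hat{\delta}^{k+1}$ and uniqueness of the implicit fixed point under $\tau L_{22}<1$) that $\delta^{k+1}_i = \hat{\delta}^{k+1}_i$ on $\{i_k = i\}$, a step the paper leaves implicit.
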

\begin{proof}
Let us observe that
\begin{align*}
\E_k r_i(\delta^{k + 1}_i) = \frac{1}{n} r_i(\hat{\delta}^{k + 1}_i) + \left(1 - \frac{1}{n}\right) r_i(\delta^k_i).
\end{align*}
Summing the above equation over $i$, multiplying both sides by $n$, and rearranging the terms yields (\ref{eq:e1}). Using $r(\delta) = \|\delta - \hat{\delta}^{k + 1}\|^2_{\tau^{-1}}$ yields (\ref{eq:e2}). Finally, plugging in $\delta = \delta^k$ into (\ref{eq:e2}) yields (\ref{eq:e3}).
\end{proof}

\newpage

\begin{lemma} \label{lemma:4}
Assume $\sigma > 0$, $\tau > 0$, $\theta \in (0,1]$, and define
\begin{align*}
\kappa = \max\{L_{12} (\sigma \tau n)^{1/2}, L_{11} \sigma\}.
\end{align*}
We then have
\begin{align}
&| \inner{w^k - w^{k + 1}}{\nabla_w \phi(w^k,\delta^k) - q^k} | \nonumber \\
&\leq \kappa \|w^{k + 1} - w^k\|^2_{\sigma^{-1}} + \frac{\kappa}{2} \|w^k - w^{k - 1}\|^2_{\sigma^{-1}} + \frac{n\kappa}{2} \|\delta^k - \delta^{k - 1}\|^2_{\tau^{-1}} \label{eq:bound1} \\
&\leq \frac{\kappa}{\theta} \|w^{k + 1} - w^k\|^2_{\sigma^{-1}} + \frac{\kappa}{2} \|w^k - w^{k - 1}\|^2_{\sigma^{-1}} + \frac{n\kappa}{2} \|\delta^k - \delta^{k - 1}\|^2_{\tau^{-1}}. \label{eq:bound2}
\end{align}
\end{lemma}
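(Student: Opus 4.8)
The plan is to bound the inner product by Cauchy--Schwarz after rewriting $\nabla_w\phi(w^k,\delta^k) - q^k$ as a short telescoping sum, and then pass to the weighted norms via Young's inequality with weights tuned so that the constant $\kappa$ appears cleanly.

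First I would unfold the definition of $q^k$ to get
$$\nabla_w\phi(w^k,\delta^k) - q^k = \big[\nabla_w\phi(w^k,\delta^k) - \nabla_w\phi(w^{k-1},\delta^{k-1})\big] + (n-1)\big[\nabla_w\phi(w^k,\delta^k) - \nabla_w\phi(w^k,\delta^{k-1})\big],$$
and then insert the intermediate point $(w^k,\delta^{k-1})$ into the first bracket, which collapses the whole expression to
$$\nabla_w\phi(w^k,\delta^k) - q^k = n\big[\nabla_w\phi(w^k,\delta^k) - \nabla_w\phi(w^k,\delta^{k-1})\big] + \big[\nabla_w\phi(w^k,\delta^{k-1}) - \nabla_w\phi(w^{k-1},\delta^{k-1})\big].$$
Splitting the inner product accordingly and applying Cauchy--Schwarz together with the $L_{12}$- and $L_{11}$-Lipschitz bounds of Assumption~\ref{assump:1}(b) gives
$$|\inner{w^k - w^{k+1}}{\nabla_w\phi(w^k,\delta^k) - q^k}| \le n L_{12}\,\|w^{k+1} - w^k\|\,\|\delta^k - \delta^{k-1}\| + L_{11}\,\|w^{k+1} - w^k\|\,\|w^k - w^{k-1}\|.$$

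Next I would rewrite each Euclidean norm in weighted form via $\|z\| = \sigma^{1/2}\|z\|_{\sigma^{-1}}$ and $\|z\| = \tau^{1/2}\|z\|_{\tau^{-1}}$. The first product then carries the factor $nL_{12}(\sigma\tau)^{1/2} = n^{1/2}\cdot L_{12}(\sigma\tau n)^{1/2} \le n^{1/2}\kappa$, and the second carries $L_{11}\sigma \le \kappa$. Applying the plain Young inequality $ab \le \tfrac12 a^2 + \tfrac12 b^2$ to the second product, and the weighted version $n^{1/2}ab \le \tfrac12 a^2 + \tfrac{n}{2}b^2$ (i.e.\ AM--GM matched to the pair $(\tfrac{\kappa}{2},\tfrac{n\kappa}{2})$) to the first, yields the bound
$$\tfrac{\kappa}{2}\|w^{k+1}-w^k\|^2_{\sigma^{-1}} + \tfrac{n\kappa}{2}\|\delta^k-\delta^{k-1}\|^2_{\tau^{-1}} + \tfrac{\kappa}{2}\|w^{k+1}-w^k\|^2_{\sigma^{-1}} + \tfrac{\kappa}{2}\|w^k-w^{k-1}\|^2_{\sigma^{-1}},$$
and collecting the two identical $\tfrac{\kappa}{2}\|w^{k+1}-w^k\|^2_{\sigma^{-1}}$ terms gives exactly \eqref{eq:bound1}. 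Then \eqref{eq:bound2} is immediate, since $\theta\in(0,1]$ gives $1\le 1/\theta$, hence $\kappa\|w^{k+1}-w^k\|^2_{\sigma^{-1}} \le \tfrac{\kappa}{\theta}\|w^{k+1}-w^k\|^2_{\sigma^{-1}}$.

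There is no real obstacle here — the argument is essentially bookkeeping. The only two points needing a little care are: (i) choosing the intermediate point $(w^k,\delta^{k-1})$ so that the $(n-1)$-term and the $\delta$-difference merge into a single coefficient $n$; and (ii) distributing the factor $n$ correctly between the definition of $\kappa$ (which supplies one $n^{1/2}$) and the Young weights (which supply the other $n^{1/2}$), so that the $\delta$-term ends up with coefficient $n\kappa/2$ rather than $\kappa/2$ or $n^{1/2}\kappa$.
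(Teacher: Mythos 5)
Your proposal is correct and follows essentially the same route as the paper's proof: the same decomposition of $\nabla_w\phi(w^k,\delta^k)-q^k$ into $n$ times the $\delta$-difference plus the $w$-difference, followed by Cauchy--Schwarz, the Lipschitz bounds, and Young's inequality with exactly the weights needed to produce the coefficients $\kappa$, $\kappa/2$, and $n\kappa/2$. The only difference is bookkeeping (the paper normalizes by $\kappa^{-1}$ before applying Young, while you keep $\kappa$ as a multiplicative factor), so there is nothing substantive to add.
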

\begin{proof}
We note that since $\theta \in (0,1]$, it suffices to prove (\ref{eq:bound1}). By the definition of $q^k$,
\begin{align} \label{eq:lemma4_1}
&\inner{w^k - w^{k + 1}}{ \nabla_w \phi(w^k,\delta^k) - q^k} \nonumber \\
&= \inner{w^k - w^{k + 1}}{ \nabla_w \phi(w^k,\delta^k) + (n - 1)\{ \nabla_w \phi(w^k,\delta^k) - \nabla_w \phi(w^k,\delta^{k - 1}) \} - \nabla_w \phi(w^{k - 1},\delta^{k - 1})} \nonumber \\
&= n \inner{w^k - w^{k + 1}}{\nabla_w \phi(w^k,\delta^k) - \nabla_w \phi(w^k,\delta^{k - 1})} + \inner{w^k - w^{k + 1}}{\nabla_w \phi(w^k,\delta^{k - 1}) - \nabla_w \phi(w^{k - 1},\delta^{k - 1})}.
\end{align}
We now bound the two inner products in (\ref{eq:lemma4_1}). The first inner product can be bounded as
\begin{align} \label{eq:lemma4_2}
&\kappa^{-1} |\inner{w^k - w^{k + 1}}{\nabla_w \phi(w^k,\delta^k) - \nabla_w \phi(w^k,\delta^{k - 1})}| \nonumber \\
&\leq (L_{12}^2 \sigma \tau n)^{-1/2} |\inner{w^k - w^{k + 1}}{\nabla_w \phi(w^k,\delta^k) - \nabla_w \phi(w^k,\delta^{k - 1})}| \nonumber \\
&\leq (L_{12}^2 \sigma \tau n)^{-1/2} \|w^{k + 1} - w^k\| \|\nabla_w \phi(w^k,\delta^k) - \nabla_w \phi(w^k,\delta^{k - 1})\| \nonumber \\
&\leq (\sigma \tau n)^{-1/2} \|w^{k + 1} - w^k\| \|\delta^k - \delta^{k - 1}\| \nonumber \\
&\leq \frac{1}{2n} \|w^{k + 1} - w^k\|^2_{\sigma^{-1}} + \frac{1}{2} \|\delta^k - \delta^{k - 1}\|^2_{\tau^{-1}}
\end{align}
where we have used the definition of $\kappa$ at the first inequality, Cauchy-Schwarz inequality at the second inequality, Lipschitz continuity of $\nabla_w \phi$ at the third inequality, and Young's inequality at the last inequality.

The second inner product can be bounded as
\begin{align} \label{eq:lemma4_3}
&\kappa^{-1} |\inner{w^k - w^{k + 1}}{\nabla_w \phi(w^k,\delta^{k - 1}) - \nabla_w \phi(w^{k - 1},\delta^{k - 1})}| \nonumber \\
&\leq (L_{11}^2 \sigma^2)^{-1/2} |\inner{w^k - w^{k + 1}}{\nabla_w \phi(w^k,\delta^{k - 1}) - \nabla_w \phi(w^{k - 1},\delta^{k - 1})}| \nonumber \\
&\leq (L_{11}^2 \sigma^2)^{-1/2} \|w^{k + 1} - w^k\| \|\nabla_w \phi(w^k,\delta^{k - 1}) - \nabla_w \phi(w^{k - 1},\delta^{k - 1})\| \nonumber \\
&\leq (\sigma^2)^{-1/2} \|w^{k + 1} - w^k\| \|w^k - w^{k - 1}\| \nonumber \\
&\leq \frac{1}{2} \|w^{k + 1} - w^k\|^2_{\sigma^{-1}} + \frac{1}{2} \|w^k - w^{k - 1}\|^2_{\sigma^{-1}}
\end{align}
where we have used the definition of $\kappa$ at the first inequality, Cauchy-Schwarz inequality at the second inequality, Lipschitz continuity of $\nabla_w \phi$ at the third inequality, and Young's inequality at the last inequality.

It follows that
\begin{align*}
&|\inner{w^k - w^{k + 1}}{ \nabla_w \phi(w^k,\delta^k) - q^k}| \\
&\leq n | \inner{w^k - w^{k + 1}}{\nabla_w \phi(w^k,\delta^k) - \nabla_x \phi(w^k,\delta^{k - 1})} | \\
&\quad + | \inner{w^k - w^{k + 1}}{\nabla_w \phi(w^k,\delta^{k - 1}) - \nabla_w \phi(w^{k - 1},\delta^{k - 1})} | \\
&\leq \kappa \|w^{k + 1} - w^k\|^2_{\sigma^{-1}} + \frac{\kappa}{2} \|w^k - w^{k - 1}\|^2_{\sigma^{-1}} + \frac{n\kappa}{2} \|\delta^k - \delta^{k - 1}\|^2_{\tau^{-1}}
\end{align*}
where we have applied the triangle inequality to (\ref{eq:lemma4_1}) at the first inequality and have used (\ref{eq:lemma4_2}) and (\ref{eq:lemma4_3}) at the second inequality.
\end{proof}

\newpage

\begin{lemma}[One-Iteration Result] \label{lemma:one-iter}
Assume $\sigma > 0$, $\tau > 0$, $\theta > 0$. We then have for any $(w,\delta)$,
\begin{align*}
0 &\geq
\begin{bmatrix}
\gamma^{k + 1}_f  + \nabla_w \phi(w^{k + 1},\hat{\delta}^{k + 1}) \vspace{1mm} \\
\hat{\gamma}^{k + 1}_g - \nabla_\delta \phi(w^{k + 1},\hat{\delta}^{k + 1})
\end{bmatrix}^\top
\begin{bmatrix}
w^{k + 1} - w \vspace{1mm} \\
\hat{\delta}^{k + 1} - \delta
\end{bmatrix} \\
&\quad + \E_k \inner{w- w^{k + 1}}{\nabla_w \phi(w^{k + 1},\delta^{k + 1}) - q^{k + 1}} -\theta \inner{w - w^k}{\nabla_w \phi(w^k,\delta^k) - q^k} \\
&\quad + \frac{1}{2} \|w - w^{k + 1}\|^2_{\sigma^{-1}} + \frac{(1 - 2\kappa)}{2} \|w^{k + 1} - w^k\|^2_{\sigma^{-1}} - \frac{1}{2} \|w - w^k\|^2_{\sigma^{-1}} - \frac{\kappa \theta}{2} \|w^k - w^{k - 1}\|^2_{\sigma^{-1}} \\
&\quad + \frac{n}{2} \E_k \|\delta - \delta^{k + 1}\|^2_{\tau^{-1}} + \frac{n}{2} \E_k \|\delta^{k + 1} - \delta^k\|^2_{\tau^{-1}} - \frac{n}{2} \|\delta - \delta^k\|^2_{\tau^{-1}} - \frac{n \kappa \theta}{2} \|\delta^k - \delta^{k - 1}\|^2_{\tau^{-1}}.
\end{align*}
\end{lemma}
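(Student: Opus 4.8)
The plan is to extract the stated potential-function inequality by pairing the two optimality conditions (\ref{eq:opt1}) (for $w^{k+1}$) and (\ref{eq:opt3}) (for $\hat\delta^{k+1}$) against an arbitrary point $(w,\delta)$, converting every quadratic cross term into squared-norm differences via the three-point identity $\inner{a-b}{a-c} = \tfrac{1}{2}(\|a-b\|^2 + \|a-c\|^2 - \|b-c\|^2)$, and absorbing the momentum correction $q^k$ with Lemma \ref{lemma:4}. The only genuinely new ingredient compared to a deterministic primal--dual argument is that the full-dimensional iterate $\hat\delta^{k+1}$ and the momentum vector $q^{k+1}$ must be tied to their stochastic counterparts through the identities (\ref{eq:e1})--(\ref{eq:e3}).

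The first step --- and the one I expect to be the crux --- is the SPDHG-style unbiasedness relation
\[
\E_k\big[\nabla_w\phi(w^{k+1},\delta^{k+1}) - q^{k+1}\big] = \nabla_w\phi(w^{k+1},\hat\delta^{k+1}) - \nabla_w\phi(w^k,\delta^k).
\]
Since $w^{k+1}$ is $\F_k$-measurable and $\delta^{k+1}$ coincides with $\hat\delta^{k+1}$ on coordinate $i_k$ and with $\delta^k$ on the rest, (\ref{eq:e1}) applied coordinatewise to $r(\cdot) = \nabla_w\phi(w^{k+1},\cdot)$ gives $\E_k\nabla_w\phi(w^{k+1},\delta^{k+1}) = \tfrac{1}{n}\nabla_w\phi(w^{k+1},\hat\delta^{k+1}) + \tfrac{n-1}{n}\nabla_w\phi(w^{k+1},\delta^k)$; substituting the definition $q^{k+1} = \nabla_w\phi(w^k,\delta^k) - (n-1)\{\nabla_w\phi(w^{k+1},\delta^{k+1}) - \nabla_w\phi(w^{k+1},\delta^k)\}$ and taking $\E_k$ makes every $\nabla_w\phi(w^{k+1},\delta^k)$ contribution cancel. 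Getting the $(n-1)$ bookkeeping to produce exactly the displayed right-hand side is where the care is needed; everything after this is mechanical.

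Next I would take the inner product of (\ref{eq:opt1}) with $w^{k+1}-w$. Inside it I rewrite $\nabla_w\phi(w^k,\delta^k) = \nabla_w\phi(w^{k+1},\hat\delta^{k+1}) - \E_k[\nabla_w\phi(w^{k+1},\delta^{k+1}) - q^{k+1}]$ using the relation above; since $w^{k+1}-w$ is $\F_k$-measurable this turns $\inner{\nabla_w\phi(w^k,\delta^k)}{w^{k+1}-w}$ into $\inner{\nabla_w\phi(w^{k+1},\hat\delta^{k+1})}{w^{k+1}-w} + \E_k\inner{w-w^{k+1}}{\nabla_w\phi(w^{k+1},\delta^{k+1}) - q^{k+1}}$, the first piece combining with $\gamma_f^{k+1}$ to form the $w$-row of the saddle-subdifferential term. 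In the momentum term $\theta\inner{\nabla_w\phi(w^k,\delta^k)-q^k}{w^{k+1}-w}$ I split $w^{k+1}-w = (w^{k+1}-w^k) + (w^k-w)$, which peels off the target term $-\theta\inner{w-w^k}{\nabla_w\phi(w^k,\delta^k)-q^k}$ and leaves a residual $\theta\inner{\nabla_w\phi(w^k,\delta^k)-q^k}{w^{k+1}-w^k}$. Applying the three-point identity to $\sigma^{-1}\inner{w^{k+1}-w^k}{w^{k+1}-w}$ produces $\tfrac{1}{2}\|w-w^{k+1}\|^2_{\sigma^{-1}} - \tfrac{1}{2}\|w-w^k\|^2_{\sigma^{-1}} + \tfrac{1}{2}\|w^{k+1}-w^k\|^2_{\sigma^{-1}}$. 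Symmetrically, taking the inner product of (\ref{eq:opt3}) with $\hat\delta^{k+1}-\delta$, applying the three-point identity to $\tau^{-1}\inner{\hat\delta^{k+1}-\delta^k}{\hat\delta^{k+1}-\delta}$, and then replacing $\|\hat\delta^{k+1}-\delta\|^2_{\tau^{-1}}$ and $\|\hat\delta^{k+1}-\delta^k\|^2_{\tau^{-1}}$ by their expressions from (\ref{eq:e2}) and (\ref{eq:e3}), yields the $\delta$-row of the saddle-subdifferential term together with $\tfrac{n}{2}\E_k\|\delta-\delta^{k+1}\|^2_{\tau^{-1}} + \tfrac{n}{2}\E_k\|\delta^{k+1}-\delta^k\|^2_{\tau^{-1}} - \tfrac{n}{2}\|\delta-\delta^k\|^2_{\tau^{-1}}$.

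Finally I add the two resulting equalities (each equal to $0$) and dispose of the single leftover cross term via $\theta\inner{\nabla_w\phi(w^k,\delta^k)-q^k}{w^{k+1}-w^k} \ge -\theta\,|\inner{w^k-w^{k+1}}{\nabla_w\phi(w^k,\delta^k)-q^k}|$ together with bound (\ref{eq:bound2}) of Lemma \ref{lemma:4}, which lower-bounds this residual by $-\kappa\|w^{k+1}-w^k\|^2_{\sigma^{-1}} - \tfrac{\kappa\theta}{2}\|w^k-w^{k-1}\|^2_{\sigma^{-1}} - \tfrac{n\kappa\theta}{2}\|\delta^k-\delta^{k-1}\|^2_{\tau^{-1}}$. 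The $-\kappa\|w^{k+1}-w^k\|^2_{\sigma^{-1}}$ merges with the $+\tfrac{1}{2}\|w^{k+1}-w^k\|^2_{\sigma^{-1}}$ already present to give the coefficient $\tfrac{1-2\kappa}{2}$, and after rearranging, the original equality $0 = (\cdots)$ becomes the claimed inequality $0 \ge (\cdots)$. Throughout, every $\E_k[\cdot]$ is merely a name for an $\F_k$-measurable quantity, and since $w, w^{k+1}, \delta, \hat\delta^{k+1}, \delta^k$ are all $\F_k$-measurable one may freely move inner products in and out of $\E_k$, so no expectation ever needs to be taken of the displayed inequality itself.
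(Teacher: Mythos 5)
Your proposal is correct and follows essentially the same route as the paper's proof: your \enquote{unbiasedness relation} is exactly the paper's identity (\ref{eq:lemma5_2}) obtained from (\ref{eq:e1}) with $r(\cdot)=\nabla_w\phi(w^{k+1},\cdot)$, and the remaining steps (pairing (\ref{eq:opt1}) and (\ref{eq:opt3}) with $w^{k+1}-w$ and $\hat{\delta}^{k+1}-\delta$, the three-point identity, the conversion via (\ref{eq:e2})--(\ref{eq:e3}), and absorbing the residual momentum term with (\ref{eq:bound2}) of Lemma \ref{lemma:4} to produce the $(1-2\kappa)/2$ coefficient) mirror the paper's argument exactly.
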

\begin{proof}
Optimality condition (\ref{eq:opt1}) implies that
\begin{align}
0 &= \inner{w^{k + 1} - w}{\gamma^{k + 1}_f + \sigma^{-1}[w^{k + 1} - w^k + \sigma\{ \nabla_w \phi(w^k,\delta^k) + \theta (\nabla_w \phi(w^k,\delta^k) - q^k) \}]} \nonumber \\
% &= \inner{w^{k + 1} - w}{\gamma^{k + 1}_f + \nabla_w \phi(w^{k + 1},\hat{\delta}^{k + 1})} - \inner{w - w^{k + 1}}{\nabla_w \phi(w^k,\delta^k) - \nabla_w \phi(w^{k + 1},\hat{\delta}^{k + 1})} \nonumber \\
% &\quad - \theta \inner{w - w^{k + 1}}{\nabla_w \phi(w^k,\delta^k) - q^k} + \sigma^{-1} \inner{w^{k + 1} - w}{w^{k + 1} - w^k} \nonumber \\
&= \inner{w^{k + 1} - w}{\gamma^{k + 1}_f + \nabla_w \phi(w^{k + 1},\hat{\delta}^{k + 1})} - \inner{w - w^{k + 1}}{\nabla_w \phi(w^k,\delta^k) - \nabla_w \phi(w^{k + 1},\hat{\delta}^{k + 1})} \nonumber \\
&\quad - \theta \inner{w - w^k}{\nabla_w \phi(w^k,\delta^k) - q^k} - \theta \inner{w^k - w^{k + 1}}{\nabla_w \phi(w^k,\delta^k) - q^k} \nonumber \\
&\quad + \sigma^{-1} \inner{w^{k + 1} - w}{w^{k + 1} - w^k} \nonumber \\
&= \inner{w^{k + 1} - w}{\gamma^{k + 1}_f + \nabla_w \phi(w^{k + 1},\hat{\delta}^{k + 1})} - \inner{w - w^{k + 1}}{\nabla_w \phi(w^k,\delta^k) - \nabla_w \phi(w^{k + 1},\hat{\delta}^{k + 1})} \nonumber \\
&\quad - \theta \inner{w - w^k}{\nabla_w \phi(w^k,\delta^k) - q^k} - \theta \inner{w^k - w^{k + 1}}{\nabla_w \phi(w^k,\delta^k) - q^k} \nonumber \\
&\quad + \frac{1}{2} \|w - w^{k + 1}\|^2_{\sigma^{-1}} + \frac{1}{2} \|w^{k + 1} - w^k\|^2_{\sigma^{-1}} - \frac{1}{2} \|w - w^k\|^2_{\sigma^{-1}}. \label{eq:lemma5_1}
\end{align}
By (\ref{eq:e1}) with $r(\delta) = \nabla_w \phi(w^{k + 1},\delta)$, we have
\begin{align}
&\nabla_w \phi(w^k,\delta^k) - \nabla_w \phi(w^{k + 1},\hat{\delta}^{k + 1}) \nonumber \\
&= \nabla_w \phi(w^k,\delta^k) - (n - 1)\{\E_k \nabla_w \phi(w^{k + 1},\delta^{k + 1}) - \nabla_w \phi(w^{k + 1},\delta^k)\} - \E_k \nabla_w \phi(w^{k + 1},\delta^{k + 1}) \nonumber \\
&= \E_k q^{k + 1} - \E_k \nabla_w \phi(w^{k + 1},\delta^{k + 1}) \label{eq:lemma5_2}
\end{align}
and by plugging this into (\ref{eq:lemma5_1}) and using linearity of expectation, we obtain
\begin{align}
0 &= \inner{w^{k + 1} - w}{\gamma^{k + 1}_f + \nabla_w \phi(w^{k + 1},\hat{\delta}^{k + 1})} + \E_k \inner{w - w^{k + 1}}{\nabla_w \phi(w^{k + 1},\delta^{k + 1}) - q^{k + 1}} \nonumber \\
&\quad - \theta \inner{w - w^k}{\nabla_w \phi(w^k,\delta^k) - q^k} - \theta \inner{w^k - w^{k + 1}}{\nabla_w \phi(w^k,\delta^k) - q^k} \nonumber \\
&\quad + \frac{1}{2} \|w - w^{k + 1}\|^2_{\sigma^{-1}} + \frac{1}{2} \|w^{k + 1} - w^k\|^2_{\sigma^{-1}} - \frac{1}{2} \|w - w^k\|^2_{\sigma^{-1}}. \label{eq:lemma5_3}
\end{align}
Applying (\ref{eq:bound2}) of Lemma \ref{lemma:4} to (\ref{eq:lemma5_3}), we have
\begin{align}
0 &\geq \inner{w^{k + 1} - w}{\gamma^{k + 1}_f + \nabla_w \phi(w^{k + 1},\hat{\delta}^{k + 1})} + \E_k \inner{w - w^{k + 1}}{\nabla_w \phi(w^{k + 1},\delta^{k + 1}) - q^{k + 1}} \nonumber \\
&\quad - \theta \inner{w - w^k}{\nabla_w \phi(w^k,\delta^k) - q^k} + \frac{1}{2} \|w - w^{k + 1}\|^2_{\sigma^{-1}} + \frac{(1 - 2\kappa)}{2} \|w^{k + 1} - w^k\|^2_{\sigma^{-1}} - \frac{1}{2} \|w - w^k\|^2_{\sigma^{-1}} \nonumber \\
&\quad  - \frac{\kappa \theta}{2} \|w^k - w^{k - 1}\|^2_{\sigma^{-1}} - \frac{n \kappa \theta}{2} \|\delta^k - \delta^{k - 1}\|^2_{\tau^{-1}}. \label{eq:lemma5_4}
\end{align}
Optimality condition (\ref{eq:opt3}) implies that
\begin{align}
0 &= \inner{\hat{\delta}^{k + 1} - \delta}{\hat{\gamma}^{k + 1}_g + \tau^{-1}(\hat{\delta}^{k + 1} - \delta^k - \tau \nabla_\delta \phi(w^{k + 1},\hat{\delta}^{k + 1}))} \nonumber \\
&= \inner{\hat{\delta}^{k + 1} - \delta}{\hat{\gamma}^{k + 1}_g -\nabla_\delta \phi(w^{k + 1},\hat{\delta}^{k + 1})} + \tau^{-1} \inner{\hat{\delta}^{k + 1} - \delta}{\hat{\delta}^{k + 1} - \delta^k} \nonumber \\
&= \inner{\hat{\delta}^{k + 1} - \delta}{\hat{\gamma}^{k + 1}_g -\nabla_\delta \phi(w^{k + 1},\hat{\delta}^{k + 1})} + \frac{1}{2} \|\delta - \hat{\delta}^{k + 1}\|^2_{\tau^{-1}} + \frac{1}{2} \|\hat{\delta}^{k + 1} - \delta^k\|^2_{\tau^{-1}} - \frac{1}{2} \|\delta - \delta^k\|^2_{\tau^{-1}} \nonumber \\
&= \inner{\hat{\delta}^{k + 1} - \delta}{\hat{\gamma}^{k + 1}_g -\nabla_\delta \phi(w^{k + 1},\hat{\delta}^{k + 1})} \nonumber \\
&\quad + \frac{n}{2} \E_k \|\delta - \delta^{k + 1}\|^2_{\tau^{-1}} + \frac{n}{2} \E_k \|\delta^{k + 1} - \delta^k\|^2_{\tau^{-1}} - \frac{n}{2} \|\delta - \delta^k\|^2_{\tau^{-1}} \label{eq:lemma5_5}
\end{align}
where we have used (\ref{eq:e2}) and (\ref{eq:e3}) at the last equality. Adding (\ref{eq:lemma5_4}) and (\ref{eq:lemma5_5}) concludes the proof.
\end{proof}

\newpage

\subsection{Proof of Theorem \ref{thm:SSI-HG1}} \label{append:thm1_proof}

\begin{lemma} \label{lemma:6}
Let $\theta = 1$. We then have
\begin{align*}
\left\|
\begin{bmatrix}
\gamma^{k + 1}_f + \nabla_w \phi(w^{k + 1},\hat{\delta}^{k + 1}) \vspace{1mm} \\
\hat{\gamma}^{k + 1}_g - \nabla_\delta \phi(x^{k + 1},\hat{\delta}^{k + 1})
\end{bmatrix}
\right\|^2 &\leq 3(\sigma^{-1} + 2L_{11}^2\sigma) \|w^{k + 1} - w^k\|^2_{\sigma^{-1}} + 6L_{11}^2\sigma \|w^k - w^{k - 1}\|^2_{\sigma^{-1}} \\
&\quad + n(\tau^{-1} + 6n L_{12}^2 \tau) \E_k \|\delta^{k + 1} - \delta^k\|^2_{\tau^{-1}} + 6n^2 L_{12}^2 \tau \|\delta^k - \delta^{k - 1}\|^2_{\tau^{-1}}.
\end{align*}
\end{lemma}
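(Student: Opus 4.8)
The plan is to bound the two blocks of the vector on the left-hand side separately. Write
\[
\left\| \begin{bmatrix} \gamma^{k+1}_f + \nabla_w \phi(w^{k+1},\hat{\delta}^{k+1}) \\ \hat{\gamma}^{k+1}_g - \nabla_\delta \phi(w^{k+1},\hat{\delta}^{k+1}) \end{bmatrix} \right\|^2 = \|\gamma^{k+1}_f + \nabla_w \phi(w^{k+1},\hat{\delta}^{k+1})\|^2 + \|\hat{\gamma}^{k+1}_g - \nabla_\delta \phi(w^{k+1},\hat{\delta}^{k+1})\|^2,
\]
and note that $w^{k+1}$ and $\hat{\delta}^{k+1}$ are $\F_k$-measurable, so the left-hand side is unchanged under $\E_k$; it therefore suffices to bound $\E_k$ of each squared block by the corresponding part of the claimed inequality. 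For the dual block, the optimality condition (\ref{eq:opt3}) gives the exact identity $\hat{\gamma}^{k+1}_g - \nabla_\delta \phi(w^{k+1},\hat{\delta}^{k+1}) = -\tau^{-1}(\hat{\delta}^{k+1} - \delta^k)$, whose squared norm equals $\tau^{-1}\|\hat{\delta}^{k+1} - \delta^k\|^2_{\tau^{-1}}$, which by (\ref{eq:e3}) equals $n\tau^{-1}\,\E_k\|\delta^{k+1} - \delta^k\|^2_{\tau^{-1}}$. This already supplies the $n\tau^{-1}$ part of the $\E_k\|\delta^{k+1}-\delta^k\|^2_{\tau^{-1}}$ coefficient.

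For the primal block I would use (\ref{eq:opt1}) with $\theta = 1$ to write $\gamma^{k+1}_f + \nabla_w \phi(w^{k+1},\hat{\delta}^{k+1}) = A + B + C$ with
\[
A = -\tfrac{1}{\sigma}(w^{k+1}-w^k),\qquad B = \nabla_w \phi(w^{k+1},\hat{\delta}^{k+1}) - \nabla_w \phi(w^k,\delta^k),\qquad C = q^k - \nabla_w \phi(w^k,\delta^k),
\]
and apply $\|A+B+C\|^2 \le 3(\|A\|^2 + \|B\|^2 + \|C\|^2)$. Here $\|A\|^2 = \sigma^{-1}\|w^{k+1}-w^k\|^2_{\sigma^{-1}}$ exactly. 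For $C$, expanding the definition of $q^k$ gives $-C = \nabla_w \phi(w^k,\delta^k) - q^k = n\{\nabla_w \phi(w^k,\delta^k) - \nabla_w \phi(w^k,\delta^{k-1})\} + \{\nabla_w \phi(w^k,\delta^{k-1}) - \nabla_w \phi(w^{k-1},\delta^{k-1})\}$, so by $\|a+b\|^2 \le 2\|a\|^2 + 2\|b\|^2$ and the Lipschitz estimates of Assumption \ref{assump:1}(b) (in $\delta$ for the first summand, in $w$ for the second), $\|C\|^2 \le 2n^2 L_{12}^2 \tau \|\delta^k - \delta^{k-1}\|^2_{\tau^{-1}} + 2L_{11}^2 \sigma \|w^k - w^{k-1}\|^2_{\sigma^{-1}}$.

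The delicate term is $B$: one should not estimate it by bounding $\|\hat{\delta}^{k+1} - \delta^k\|$ directly. Instead I would reuse the identity (\ref{eq:lemma5_2}) established in the proof of Lemma \ref{lemma:one-iter} (itself a consequence of (\ref{eq:e1}) applied to $r(\delta) = \nabla_w \phi(w^{k+1},\delta)$) to write $B = \E_k[\nabla_w \phi(w^{k+1},\delta^{k+1}) - q^{k+1}]$, then expand $q^{k+1}$ exactly as for $C$ to get $B = n\,\E_k\{\nabla_w \phi(w^{k+1},\delta^{k+1}) - \nabla_w \phi(w^{k+1},\delta^k)\} + \{\nabla_w \phi(w^{k+1},\delta^k) - \nabla_w \phi(w^k,\delta^k)\}$, where the last bracket is $\F_k$-measurable. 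Applying Jensen's inequality to the conditional expectation, Lipschitz continuity in $\delta$, and Lipschitz continuity in $w$ to the remaining bracket then yields $\|B\|^2 \le 2n^2 L_{12}^2 \tau \,\E_k\|\delta^{k+1}-\delta^k\|^2_{\tau^{-1}} + 2L_{11}^2 \sigma \|w^{k+1}-w^k\|^2_{\sigma^{-1}}$. Collecting $3(\|A\|^2+\|B\|^2+\|C\|^2)$ and adding the dual-block term $n\tau^{-1}\E_k\|\delta^{k+1}-\delta^k\|^2_{\tau^{-1}}$ gives precisely the stated bound after regrouping $3\sigma^{-1} + 6L_{11}^2\sigma = 3(\sigma^{-1}+2L_{11}^2\sigma)$ and $6n^2 L_{12}^2\tau + n\tau^{-1} = n(\tau^{-1} + 6nL_{12}^2\tau)$. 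The main obstacle is this treatment of $B$: obtaining the correct power of $n$ and avoiding an extraneous term requires routing through the conditional-expectation identity (\ref{eq:lemma5_2}) rather than a direct Lipschitz estimate on $\hat{\delta}^{k+1}-\delta^k$, and being careful that the ``$w$-only'' difference inside $B$ is $\F_k$-measurable so that Jensen touches only the genuinely stochastic part.
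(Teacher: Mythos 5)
Your proposal is correct and follows essentially the same route as the paper's proof: both decompose the primal block via the optimality condition (\ref{eq:opt1}) together with the identity (\ref{eq:lemma5_2}), apply the three-term quadratic bound, control the $q^k$/$q^{k+1}$ differences with the Lipschitz estimates, and handle the dual block exactly via (\ref{eq:opt3}) and (\ref{eq:e3}). The only (immaterial) difference is that you expand $q^{k+1}$ before applying Jensen's inequality to the conditional expectation, whereas the paper applies Jensen to the whole $\E_k\{\nabla_w\phi(w^{k+1},\delta^{k+1})-q^{k+1}\}$ first and then bounds pointwise; the resulting coefficients are identical.
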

\begin{proof}
First, observe that
\begin{align}
&\| \gamma^{k + 1}_f + \nabla_w \phi(w^{k + 1},\hat{\delta}^{k + 1}) \|^2 \nonumber \\
&= \| \sigma^{-1}(- w^{k + 1} + w^k) + \E_k \{\nabla_w \phi(w^{k + 1},\delta^{k + 1}) - q^{k + 1}\} - \{ \nabla_w \phi(w^k,\delta^k) - q^k \} \|^2 \nonumber \\
&\leq 3\sigma^{-1} \|w^{k + 1} - w^k\|^2_{\sigma^{-1}} + 3 \| \E_k \{\nabla_w \phi(w^{k + 1},\delta^{k + 1}) - q^{k + 1}\} \|^2 + 3 \| \nabla_w \phi(w^k,\delta^k) - q^k \|^2 \nonumber \\
&\leq 3\sigma^{-1} \|w^{k + 1} - w^k\|^2_{\sigma^{-1}} + 3 \E_k \| \nabla_w \phi(w^{k + 1},\delta^{k + 1}) - q^{k + 1} \|^2 + 3 \| \nabla_w \phi(w^k,\delta^k) - q^k \|^2 \label{eq:lemma6_1}
\end{align}
where we have used the combination of (\ref{eq:opt1}) and (\ref{eq:lemma5_2}) at the first equality and Jensen's inequality at the first and second inequalities. We have
\begin{align}
&\| \nabla_w \phi(w^k,\delta^k) - q^k \|^2 \nonumber \\
&= \| \nabla_w \phi(w^k,\delta^k) + (n - 1)\{ \nabla_w \phi(w^k,\delta^k) - \nabla_w \phi(w^k,\delta^{k - 1}) \} - \nabla_w \phi(w^{k - 1},\delta^{k - 1}) \|^2 \nonumber \\
&= \| n \{ \nabla_w \phi(w^k,\delta^k) - \nabla_w \phi(w^k,\delta^{k - 1}) \} + \{ \nabla_w \phi(w^k,\delta^{k - 1}) - \nabla_w \phi(w^{k - 1},\delta^{k - 1}) \} \|^2 \nonumber \\
&\leq 2n^2 \| \nabla_w \phi(w^k,\delta^k) - \nabla_w \phi(w^k,\delta^{k - 1}) \|^2 + 2 \| \nabla_w \phi(w^k,\delta^{k - 1}) - \nabla_w \phi(w^{k - 1},\delta^{k - 1}) \|^2 \nonumber \\
&\leq 2n^2 L_{12}^2 \|\delta^k - \delta^{k - 1}\|^2 + 2L_{11}^2 \|w^k - w^{k - 1}\|^2 \nonumber \\
&= 2n^2 L_{12}^2 \tau \|\delta^k - \delta^{k - 1}\|^2_{\tau^{-1}} + 2L_{11}^2 \sigma \|w^k - w^{k - 1}\|^2_{\sigma^{-1}} \label{eq:lemma6_2}
\end{align}
where we have used the definition of $q^k$ at the first equality, Jensen's inequality at the first inequality, and Lipschitz continuity of $\nabla_w \phi(w,\delta)$ at the second inequality. Similarly,
\begin{align}
\| \nabla_w \phi(w^{k + 1},\delta^{k + 1}) - q^{k + 1} \|^2 \leq 2n^2 L_{12}^2 \tau \|\delta^{k + 1} - \delta^k\|^2_{\tau^{-1}} +  2L_{11}^2 \sigma \|w^{k + 1} - w^k\|^2_{\sigma^{-1}}. \label{eq:lemma6_3}
\end{align}
Applying (\ref{eq:lemma6_2}) and (\ref{eq:lemma6_3}) to (\ref{eq:lemma6_1}), we obtain
\begin{align}
\| \gamma^{k + 1}_f + \nabla_w \phi(w^{k + 1},\hat{\delta}^{k + 1}) \|^2 &\leq 3(\sigma^{-1} + 2L_{11}^2\sigma) \|w^{k + 1} - w^k\|^2_{\sigma^{-1}} + 6L_{11}^2\sigma \|w^k - w^{k - 1}\|^2_{\sigma^{-1}} \nonumber \\
&\quad + 6n^2 L_{12}^2 \tau \E_k \|\delta^{k + 1} - \delta^k\|^2_{\tau^{-1}} + 6n^2 L_{12}^2 \tau \|\delta^k - \delta^{k - 1}\|^2_{\tau^{-1}}. \label{eq:lemma6_4}
\end{align}
We also have
\begin{align}
\| \hat{\gamma}^{k + 1}_g - \nabla_\delta \phi(w^{k + 1},\hat{\delta}^{k + 1}) \|^2 &= \| \tau^{-1}(-\hat{\delta}^{k + 1} + \delta^k) \|^2 \nonumber \\
&= \tau^{-1} \|\hat{\delta}^{k + 1} - \delta^k\|^2_{\tau^{-1}} \nonumber \\
&= n \tau^{-1} \E_k \|\delta^{k + 1} - \delta^k\|^2_{\tau^{-1}} \label{eq:lemma6_5}
\end{align}
where we have used (\ref{eq:opt3}) at the first equality and (\ref{eq:e3}) at the third equality. Combining (\ref{eq:lemma6_4}) and (\ref{eq:lemma6_5}) concludes the proof.
\end{proof}

\newpage

\textbf{Theorem 1.} \textit{Suppose Assumptions \ref{assump:1} and \ref{assump:2} are true. Let $\{(w^k,\delta^k)\}$ be the sequence generated by SSI-HG, and define the full-dimensional update (which only depends on $w^k$ and $\delta^{k - 1}$)}
\begin{align*}
\hat{\delta}^k = \prox^\tau_g [\delta^{k - 1} + \tau \nabla_\delta \phi(w^k,\hat{\delta}^k)].
\end{align*}
\textit{If $\theta = 1$ and $\rho > 0$, $\sigma > 0$, $\tau > 0$ satisfy}
\begin{align*}
\max\{L_{12} (\sigma \tau n)^{1/2}, L_{11} \sigma\} < \min\{1/3 - \rho(\sigma^{-1} + 4L_{11}^2\sigma), 1 - \rho(\tau^{-1} + 12 n L_{12}^2 \tau)\},
\end{align*}
\textit{we have}
\begin{align*}
\frac{1}{K} \sum_{k = 1}^K \E \|F(w^k,\hat{\delta}^k)\|^2 = O(1/K).
\end{align*}
\begin{proof}
Let $(w^*,\delta^*)$ be a solution of the weak MVI problem (which exists by Assumption \ref{assump:2}). Then
\begin{align}
0 &\geq
\begin{bmatrix}
\gamma^{k + 1}_f  + \nabla_w \phi(w^{k + 1},\hat{\delta}^{k + 1}) \vspace{1mm} \\
\hat{\gamma}^{k + 1}_g - \nabla_\delta \phi(w^{k + 1},\hat{\delta}^{k + 1})
\end{bmatrix}^\top
\begin{bmatrix}
w^{k + 1} - w^* \vspace{1mm} \\
\hat{\delta}^{k + 1} - \delta^*
\end{bmatrix} \nonumber \\
&\quad + \E_k \inner{w^* - w^{k + 1}}{\nabla_w \phi(w^{k + 1},\delta^{k + 1}) - q^{k + 1}} - \inner{w^* - w^k}{\nabla_w \phi(w^k,\delta^k) - q^k} \nonumber \\
&\quad + \frac{1}{2} \|w^* - w^{k + 1}\|^2_{\sigma^{-1}} + \frac{(1 - 2\kappa)}{2} \|w^{k + 1} - w^k\|^2_{\sigma^{-1}} - \frac{1}{2} \|w^* - w^k\|^2_{\sigma^{-1}} - \frac{\kappa}{2} \|w^k - w^{k - 1}\|^2_{\sigma^{-1}} \nonumber \\
&\quad + \frac{n}{2} \E_k \|\delta^* - \delta^{k + 1}\|^2_{\tau^{-1}} + \frac{n}{2} \E_k \|\delta^{k + 1} - \delta^k\|^2_{\tau^{-1}} - \frac{n}{2} \|\delta^* - \delta^k\|^2_{\tau^{-1}} - \frac{n \kappa}{2} \|\delta^k - \delta^{k - 1}\|^2_{\tau^{-1}} \nonumber \\
&\geq -\frac{\rho}{2} \left\|
\begin{bmatrix}
\gamma^{k + 1}_f + \nabla_w \phi(w^{k + 1},\hat{\delta}^{k + 1}) \vspace{1mm} \\
\hat{\gamma}^{k + 1}_g - \nabla_y \phi(w^{k + 1},\hat{\delta}^{k + 1})
\end{bmatrix}
\right\|^2 \nonumber \\
&\quad + \E_k \inner{w^* - w^{k + 1}}{\nabla_w \phi(w^{k + 1},\delta^{k + 1}) - q^{k + 1}} - \inner{w^* - w^k}{\nabla_w \phi(w^k,\delta^k) - q^k} \nonumber \\
&\quad + \frac{1}{2} \|w^* - w^{k + 1}\|^2_{\sigma^{-1}} + \frac{(1 - 2\kappa)}{2} \|w^{k + 1} - w^k\|^2_{\sigma^{-1}} - \frac{1}{2} \|w^*- w^k\|^2_{\sigma^{-1}} - \frac{\kappa}{2} \|w^k - w^{k - 1}\|^2_{\sigma^{-1}} \nonumber \\
&\quad + \frac{n}{2} \E_k \|\delta^* - \delta^{k + 1}\|^2_{\tau^{-1}} + \frac{n}{2} \E_k \|\delta^{k + 1} - \delta^k\|^2_{\tau^{-1}} - \frac{n}{2} \|\delta^* - \delta^k\|^2_{\tau^{-1}} - \frac{n \kappa}{2} \|\delta^k - \delta^{k - 1}\|^2_{\tau^{-1}} \nonumber \\
&\geq \E_k \inner{w^* - w^{k + 1}}{\nabla_w \phi(w^{k + 1},\delta^{k + 1}) - q^{k + 1}} - \inner{w^* - w^k}{\nabla_w \phi(w^k,\delta^k) - q^k} \nonumber \\
&\quad + \frac{1}{2} \|w^* - w^{k + 1}\|^2_{\sigma^{-1}} + \frac{\{1 - 2\kappa - 3 \rho (\sigma^{-1} + 2L_{11}^2\sigma)\}}{2} \|w^{k + 1} - w^k\|^2_{\sigma^{-1}} \nonumber \\
&\quad - \frac{1}{2} \|w^*- w^k\|^2_{\sigma^{-1}} - \frac{\{\kappa + \rho (6 L_{11}^2\sigma)\}}{2} \|w^k - w^{k - 1}\|^2_{\sigma^{-1}} \nonumber \\
&\quad + \frac{n}{2} \E_k \|\delta^* - \delta^{k + 1}\|^2_{\tau^{-1}} + \frac{n\{ 1 - \rho (\tau^{-1} + 6 n L_{12}^2 \tau) \}}{2} \E_k \|\delta^{k + 1} - \delta^k\|^2_{\tau^{-1}} \nonumber \\
&\quad - \frac{n}{2} \|\delta^* - \delta^k\|^2_{\tau^{-1}} - \frac{n \{\kappa + \rho (6 n L_{12}^2 \tau)\}}{2} \|\delta^k - \delta^{k - 1}\|^2_{\tau^{-1}} \label{eq:thm1_1}
\end{align}
where we have used Lemma \ref{lemma:one-iter} with $\theta = 1$ at the first inequality, Assumption \ref{assump:2} at the second inequality, and Lemma \ref{lemma:6} at the third inequality. Rearranging and adding and subtracting some terms in (\ref{eq:thm1_1}), we obtain
\begin{align}
&\inner{w^* - w^k}{\nabla_w \phi(w^k,\delta^k) - q^k} + \frac{1}{2} \|w^*- w^k\|^2_{\sigma^{-1}} + \frac{n}{2} \|\delta^* - \delta^k\|^2_{\tau^{-1}} \nonumber \\
&\quad + \frac{\{\kappa + \rho (6 L_{11}^2\sigma)\}}{2} \|w^k - w^{k - 1}\|^2_{\sigma^{-1}} + \frac{n \{\kappa + \rho (6 n L_{12}^2 \tau)\}}{2} \|\delta^k - \delta^{k - 1}\|^2_{\tau^{-1}} \nonumber \\
&\geq \E_k \inner{w^* - w^{k + 1}}{\nabla_w \phi(w^{k + 1},\delta^{k + 1}) - q^{k + 1}} + \frac{1}{2} \|w^* - w^{k + 1}\|^2_{\sigma^{-1}} + \frac{n}{2} \E_k \|\delta^* - \delta^{k + 1}\|^2_{\tau^{-1}} \nonumber \\
&\quad + \frac{\{\kappa + \rho (6 L_{11}^2\sigma)\}}{2} \|w^{k + 1} - w^k\|^2_{\sigma^{-1}} + \frac{n \{\kappa + \rho (6 n L_{12}^2 \tau)\}}{2} \|\delta^{k + 1} - \delta^k\|^2_{\tau^{-1}} \nonumber \\
&\quad + \frac{\{1 - 3\kappa - 3 \rho(\sigma^{-1} + 4L_{11}^2\sigma)\}}{2} \|w^{k + 1} - w^k\|^2_{\sigma^{-1}} \nonumber \\
&\quad + \frac{n\{ 1 - \kappa - \rho (\tau^{-1} + 12 n L_{12}^2 \tau) \}}{2} \E_k \|\delta^{k + 1} - \delta^k\|^2_{\tau^{-1}}. \label{eq:thm1_2}
\end{align}
Taking full expectation over (\ref{eq:thm1_2}), summing both sides over $k = 0, \ldots, K - 1$, and using $(w^{-1},\delta^{-1}) = (w^0,\delta^0)$, we obtain
\begin{align}
& \frac{1}{2} \|w^* - w^0\|^2_{\sigma^{-1}} + \frac{n}{2} \|\delta^* - \delta^0\|^2_{\tau^{-1}} \nonumber \\
&\geq \E \inner{w^* - w^K}{\nabla_w \phi(w^K,\delta^K) - q^K} + \frac{1}{2} \|w^* - w^K\|^2_{\sigma^{-1}} + \frac{n}{2} \E \|\delta^* - \delta^K\|^2_{\tau^{-1}} \nonumber \\
&\quad + \frac{\{\kappa + \rho (6 L_{11}^2\sigma)\}}{2} \E \|w^K - w^{K - 1}\|^2_{\sigma^{-1}} + \frac{n \{\kappa + \rho (6 n L_{12}^2 \tau)\}}{2} \E \|\delta^K - \delta^{K - 1}\|^2_{\tau^{-1}} \nonumber \\
&\quad + \frac{\{1 - 3\kappa - 3 \rho (\sigma^{-1} + 4L_{11}^2\sigma)\}}{2} \sum_{k = 0}^{K - 1} \E \|w^{k + 1} - w^k\|^2_{\sigma^{-1}} \nonumber \\
&\quad + \frac{n\{ 1 - \kappa - \rho (\tau^{-1} + 12 n L_{12}^2 \tau) \}}{2} \sum_{k = 0}^{K - 1} \E \|\delta^{k + 1} - \delta^k\|^2_{\tau^{-1}}. \label{eq:thm1_3}
\end{align}
By reasoning similarly as the proof of Lemma \ref{lemma:4}, we have
\begin{align}
| \inner{w^* - w^K}{\nabla_w \phi(w^K,\delta^K) - q^K} | \leq \kappa \|w^* - w^K\|^2_{\sigma^{-1}} + \frac{\kappa}{2} \|w^K - w^{K - 1}\|^2_{\sigma^{-1}} + \frac{n\kappa}{2} \|\delta^K - \delta^{K - 1}\|^2_{\tau^{-1}} \label{eq:inner_bound}
\end{align}
and so (\ref{eq:thm1_3}) implies
\begin{align}
&\frac{1}{2} \|w^* - w^0\|^2_{\sigma^{-1}} + \frac{n}{2} \|\delta^* - \delta^0\|^2_{\tau^{-1}} \nonumber \\
&\geq \frac{(1 - 2\kappa)}{2} \|w^* - w^K\|^2_{\sigma^{-1}} + \frac{n}{2} \E \|\delta^* - \delta^K\|^2_{\tau^{-1}} \nonumber \\
&\quad + \frac{\rho (6 L_{11}^2\sigma)}{2} \E \|w^K - w^{K - 1}\|^2_{\sigma^{-1}} + \frac{\rho (6 n^2 L_{12}^2 \tau)}{2} \E \|\delta^K - \delta^{K - 1}\|^2_{\tau^{-1}} \nonumber \\
&\quad + \frac{\{1 - 3\kappa - 3 \rho (\sigma^{-1} + 4L_{11}^2\sigma) \}}{2} \sum_{k = 0}^{K - 1} \E \|w^{k + 1} - w^k\|^2_{\sigma^{-1}} \nonumber \\
&\quad + \frac{n\{ 1 - \kappa - \rho (\tau^{-1} + 12 n L_{12}^2 \tau) \}}{2} \sum_{k = 0}^{K - 1} \E \|\delta^{k + 1} - \delta^k\|^2_{\tau^{-1}} \label{eq:thm1_4}
\end{align}
All the coefficients for the quadratic terms in (\ref{eq:thm1_4}) are positive by the definition of $\kappa$ (in Lemma \ref{lemma:4}) and the step-size conditions. Hence, we may remove the first four quadratic terms at the RHS of (\ref{eq:thm1_4}) to obtain
\begin{align}
&\frac{1}{2} \|w^* - w^0\|^2_{\sigma^{-1}} + \frac{n}{2} \|\delta^* - \delta^0\|^2_{\tau^{-1}} \nonumber \\
&\geq \frac{\{1 - 3\kappa - 3 \rho (\sigma^{-1} + 4L_{11}^2\sigma)\}}{2} \sum_{k = 0}^{K - 1} \E \|w^{k + 1} - w^k\|^2_{\sigma^{-1}} \nonumber \\
&\quad + \frac{n\{ 1 - \kappa - \rho (\tau^{-1} + 12 n L_{12}^2 \tau) \}}{2} \sum_{k = 0}^{K - 1} \E \|\delta^{k + 1} - \delta^k\|^2_{\tau^{-1}}. \label{eq:thm1_5}
\end{align}
This proves that
\begin{align}
\frac{1}{K} \sum_{k = 0}^{K - 1} \E \|w^{k + 1} - w^k\|^2 = O(1/K), \qquad \frac{1}{K} \sum_{k = 0}^{K - 1} \E \|\delta^{k + 1} - \delta^k\|^2 = O(1/K). \label{eq:thm1_6}
\end{align}
By Lemma \ref{lemma:6} and the definition of the saddle subdifferential norm,
\begin{align}
\E \|F(w^{k + 1},\hat{\delta}^{k + 1})\|^2 &\leq 3(\sigma^{-1} + 2L_{11}^2\sigma) \E \|w^{k + 1} - w^k\|^2_{\sigma^{-1}} + 6L_{11}^2\sigma \E \|w^k - w^{k - 1}\|^2_{\sigma^{-1}} \nonumber \\
&\quad + n(\tau^{-1} + 6n L_{12}^2 \tau) \E \|\delta^{k + 1} - \delta^k\|^2_{\tau^{-1}} + 6n^2 L_{12}^2 \tau \E \|\delta^k - \delta^{k - 1}\|^2_{\tau^{-1}}. \label{eq:thm1_7}
\end{align}
Averaging (\ref{eq:thm1_7}) over $k = 0, \ldots, K - 1$ and using (\ref{eq:thm1_6}) concludes the proof.
\end{proof}

\newpage

\subsection{Proof of Theorem \ref{thm:SSI-HG2}} \label{append:thm2_proof}

\textbf{Theorem 2.} \textit{Suppose Assumptions \ref{assump:1} and \ref{assump:3} are true. Let $\{(w^k,\delta^k)\}$ be the sequence generated by SSI-HG. If $\theta > 0$, $\sigma > 0$, $\tau > 0$ satisfy}
\begin{align*}
\max\{L_{12} (\sigma \tau n)^{1/2}, L_{11} \sigma\} \leq 1/3, \qquad \theta = \max\left\{ \frac{1}{1 + \mu \sigma}, \frac{1 + (n - 1) \mu \tau / n}{1 + \mu \tau} \right\},
\end{align*}
\textit{we have}
\begin{align*}
\E \|w^* - w^K\|^2 = O(\theta^K), \qquad \E \|\delta^* - \delta^K\|^2 = O(\theta^K).
\end{align*}
\begin{proof}
Let $(w^*,\delta^*)$ be a solution of the strong MVI problem (which exists by Assumption \ref{assump:3}). Then
\begin{align}
0 &\geq
\begin{bmatrix}
\gamma^{k + 1}_f  + \nabla_w \phi(w^{k + 1},\hat{\delta}^{k + 1}) \vspace{1mm} \\
\hat{\gamma}^{k + 1}_g - \nabla_y \phi(w^{k + 1},\hat{\delta}^{k + 1})
\end{bmatrix}^\top
\begin{bmatrix}
w^{k + 1} - w^* \vspace{1mm} \\
\hat{\delta}^{k + 1} - \delta^*
\end{bmatrix} \nonumber \\
&\quad + \E_k \inner{w^* - w^{k + 1}}{\nabla_w \phi(w^{k + 1},\delta^{k + 1}) - q^{k + 1}} - \theta \inner{w^* - w^k}{\nabla_w \phi(w^k,\delta^k) - q^k} \nonumber \\
&\quad + \frac{1}{2} \|w^* - w^{k + 1}\|^2_{\sigma^{-1}} + \frac{(1 - 2\kappa)}{2} \|w^{k + 1} - w^k\|^2_{\sigma^{-1}} - \frac{1}{2} \|w^* - w^k\|^2_{\sigma^{-1}} - \frac{\kappa \theta}{2} \|w^k - w^{k - 1}\|^2_{\sigma^{-1}} \nonumber \\
&\quad + \frac{n}{2} \E_k \|\delta^* - \delta^{k + 1}\|^2_{\tau^{-1}} + \frac{n}{2} \E_k \|\delta^{k + 1} - \delta^k\|^2_{\tau^{-1}} - \frac{n}{2} \|\delta^* - \delta^k\|^2_{\tau^{-1}} - \frac{n \kappa \theta}{2} \|\delta^k - \delta^{k - 1}\|^2_{\tau^{-1}} \nonumber \\
&\geq \frac{\mu}{2} \left\|
\begin{bmatrix}
w^{k + 1} - w^*\\
\hat{\delta}^{k + 1} - \delta^*
\end{bmatrix}
\right\|^2 \nonumber \\
&\quad + \E_k \inner{w^* - w^{k + 1}}{\nabla_w \phi(w^{k + 1},\delta^{k + 1}) - q^{k + 1}} - \theta \inner{w^* - w^k}{\nabla_w \phi(w^k,\delta^k) - q^k} \nonumber \\
&\quad + \frac{1}{2} \|w^* - w^{k + 1}\|^2_{\sigma^{-1}} + \frac{(1 - 2\kappa)}{2} \|w^{k + 1} - w^k\|^2_{\sigma^{-1}} - \frac{1}{2} \|w^*- w^k\|^2_{\sigma^{-1}} - \frac{\kappa \theta}{2} \|w^k - w^{k - 1}\|^2_{\sigma^{-1}} \nonumber \\
&\quad + \frac{n}{2} \E_k \|\delta^* - \delta^{k + 1}\|^2_{\tau^{-1}} + \frac{n}{2} \E_k \|\delta^{k + 1} - \delta^k\|^2_{\tau^{-1}} - \frac{n}{2} \|\delta^* - \delta^k\|^2_{\tau^{-1}} - \frac{n \kappa \theta}{2} \|\delta^k - \delta^{k - 1}\|^2_{\tau^{-1}} \nonumber \\
&= \E_k \inner{w^* - w^{k + 1}}{\nabla_w \phi(w^{k + 1},\delta^{k + 1}) - q^{k + 1}} - \theta \inner{w^* - w^k}{\nabla_w \phi(w^k,\delta^k) - q^k} \nonumber \\
&\quad + \frac{(1 + \mu \sigma)}{2} \|w^* - w^{k + 1}\|^2_{\sigma^{-1}} + \frac{(1 - 2\kappa)}{2} \|w^{k + 1} - w^k\|^2_{\sigma^{-1}} - \frac{1}{2} \|w^*- w^k\|^2_{\sigma^{-1}} - \frac{\kappa \theta}{2} \|w^k - w^{k - 1}\|^2_{\sigma^{-1}} \nonumber \\
&\quad + \frac{n(1 + \mu \tau)}{2} \E_k \|\delta^* - \delta^{k + 1}\|^2_{\tau^{-1}} + \frac{n}{2} \E_k \|\delta^{k + 1} - \delta^k\|^2_{\tau^{-1}} \nonumber \\
&\quad - \frac{n\{1 + (n - 1) \mu \tau / n\}}{2} \|\delta^* - \delta^k\|^2_{\tau^{-1}} - \frac{n \kappa \theta}{2} \|\delta^k - \delta^{k - 1}\|^2_{\tau^{-1}} \label{eq:thm2_1}
\end{align}
where we have used Lemma \ref{lemma:one-iter} at the first inequality, Assumption \ref{assump:3} at the second inequality, and (\ref{eq:e2}) at the first equality. Rearranging and adding and subtracting some terms in (\ref{eq:thm2_1}), we obtain
\begin{align}
&\theta \inner{w^* - w^k}{\nabla_w \phi(w^k,\delta^k) - q^k} + \frac{1}{2} \|w^*- w^k\|^2_{\sigma^{-1}} + \frac{n\{1 + (n - 1) \mu \tau / n\}}{2} \|\delta^* - \delta^k\|^2_{\tau^{-1}} \nonumber \\
&\quad + \frac{\kappa \theta}{2} \|w^k - w^{k - 1}\|^2_{\sigma^{-1}} + \frac{n \kappa \theta}{2} \|\delta^k - \delta^{k - 1}\|^2_{\tau^{-1}} \nonumber \\
&\geq \E_k \inner{w^* - w^{k + 1}}{\nabla_w \phi(w^{k + 1},\delta^{k + 1}) - q^{k + 1}} + \frac{(1 + \mu \sigma)}{2} \|w^* - w^{k + 1}\|^2_{\sigma^{-1}} \nonumber \\
&\quad + \frac{n(1 + \mu \tau)}{2} \E_k \|\delta^* - \delta^{k + 1}\|^2_{\tau^{-1}} + \frac{\kappa}{2} \|w^{k + 1} - w^k\|^2_{\sigma^{-1}} + \frac{n \kappa}{2} \|\delta^{k + 1} - \delta^k\|^2_{\tau^{-1}} \nonumber \\
&\quad + \frac{(1 - 3\kappa)}{2} \|w^{k + 1} - w^k\|^2_{\sigma^{-1}} + \frac{n(1 - \kappa)}{2} \E_k \|\delta^{k + 1} - \delta^k\|^2_{\tau^{-1}}. \label{eq:thm2_2}
\end{align}
All the coefficients for the quadratic terms in (\ref{eq:thm2_2}) are non-negative by the definition of $\kappa$ (in Lemma \ref{lemma:4}) and the step-size conditions. Hence, we may remove the last two quadratic terms at the RHS of (\ref{eq:thm2_2}) and take full expectation to obtain
\begin{align}
&\theta \E \inner{w^* - w^k}{\nabla_w \phi(w^k,\delta^k) - q^k} + \frac{1}{2} \E \|w^*- w^k\|^2_{\sigma^{-1}} + \frac{n\{1 + (n - 1) \mu \tau / n\}}{2} \E \|\delta^* - \delta^k\|^2_{\tau^{-1}} \nonumber \\
&\quad + \frac{\kappa \theta}{2} \E \|w^k - w^{k - 1}\|^2_{\sigma^{-1}} + \frac{n \kappa \theta}{2} \E \|\delta^k - \delta^{k - 1}\|^2_{\tau^{-1}} \nonumber \\
&\geq \E \inner{w^* - w^{k + 1}}{\nabla_w \phi(w^{k + 1},\delta^{k + 1}) - q^{k + 1}} + \frac{(1 + \mu \sigma)}{2} \E \|w^* - w^{k + 1}\|^2_{\sigma^{-1}} \nonumber \\
&\quad + \frac{n(1 + \mu \tau)}{2} \E \|\delta^* - \delta^{k + 1}\|^2_{\tau^{-1}} + \frac{\kappa}{2} \E \|w^{k + 1} - w^k\|^2_{\sigma^{-1}} + \frac{n \kappa}{2} \E \|\delta^{k + 1} - \delta^k\|^2_{\tau^{-1}}. \label{eq:thm2_3}
\end{align}
Combining (\ref{eq:thm2_3}) and the definition of $\theta$, we have
\begin{align*}
&\theta \bigg[ \E \inner{w^* - w^k}{\nabla_w \phi(w^k,\delta^k) - q^k} + \frac{(1 + \mu \sigma)}{2} \E \|w^* - w^k\|^2_{\sigma^{-1}} \nonumber \\
&\quad + \frac{n(1 + \mu \tau)}{2} \E \|\delta^* - \delta^k\|^2_{\tau^{-1}} + \frac{\kappa}{2} \E \|w^k - w^{k - 1}\|^2_{\sigma^{-1}} + \frac{n \kappa}{2} \E \|\delta^k - \delta^{k - 1}\|^2_{\tau^{-1}} \bigg] \nonumber \\
&\geq \E \inner{w^* - w^{k + 1}}{\nabla_w \phi(w^{k + 1},\delta^{k + 1}) - q^{k + 1}} + \frac{(1 + \mu \sigma)}{2} \E \|w^* - w^{k + 1}\|^2_{\sigma^{-1}} \nonumber \\
&\quad + \frac{n(1 + \mu \tau)}{2} \E \|\delta^* - \delta^{k + 1}\|^2_{\tau^{-1}} + \frac{\kappa}{2} \E \|w^{k + 1} - w^k\|^2_{\sigma^{-1}} + \frac{n \kappa}{2} \E \|\delta^{k + 1} - \delta^k\|^2_{\tau^{-1}}
\end{align*}
which, together with the fact that $(w^{-1},\delta^{-1}) = (w^0,\delta^0)$, establishes
\begin{align}
&\theta^K \bigg[ \frac{(1 + \mu \sigma)}{2} \|w^* - w^0\|^2_{\sigma^{-1}} + \frac{n(1 + \mu \tau)}{2} \|\delta^* - \delta^0\|^2_{\tau^{-1}} \bigg] \nonumber \\
&\geq \E \inner{w^* - w^K}{\nabla_w \phi(w^K,\delta^K) - q^K} + \frac{(1 + \mu \sigma)}{2} \E \|w^* - w^K\|^2_{\sigma^{-1}} \nonumber \\
&\quad + \frac{n(1 + \mu \tau)}{2} \E \|\delta^* - \delta^K\|^2_{\tau^{-1}} + \frac{\kappa}{2} \E \|w^K - w^{K - 1}\|^2_{\sigma^{-1}} + \frac{n \kappa}{2} \E \|\delta^K - \delta^{K - 1}\|^2_{\tau^{-1}} \nonumber \\
&\geq \frac{(1 + \mu \sigma - 2\kappa)}{2} \E \|w^* - w^K\|^2_{\sigma^{-1}} + \frac{n(1 + \mu \tau)}{2} \E \|\delta^* - \delta^K\|^2_{\tau^{-1}} \label{eq:thm2_4}
\end{align}
where we have used (\ref{eq:inner_bound}) at the last inequality. By the definition of $\kappa$ and the step-size conditions, all the coefficients for the quadratic terms at the RHS of (\ref{eq:thm2_4}) are positive. This concludes the proof.
\end{proof}

\subsection{Corollaries \ref{cor:DSI-HG1} and \ref{cor:DSI-HG2}} \label{append:corollaries}

\textbf{Corollary 3.} \textit{Suppose Assumptions \ref{assump:1} and \ref{assump:2} are true. Let $\{(w^k,\delta^k)\}$ be the sequence generated by DSI-HG. If $\theta = 1$ and $\rho > 0$, $\sigma > 0$, $\tau > 0$ satisfy}
\begin{align*}
\max\{L_{12} (\sigma \tau)^{1/2}, L_{11} \sigma\} < \min\{1/3 - \rho(\sigma^{-1} + 4L_{11}^2\sigma), 1 - \rho(\tau^{-1} + 12 L_{12}^2 \tau)\},
\end{align*}
\textit{we have}
\begin{align*}
\frac{1}{K} \sum_{k = 1}^K \|F(w^k,\delta^k)\|^2 = O(1/K).
\end{align*}

\textbf{Corollary 4.} \textit{Suppose Assumptions \ref{assump:1} and \ref{assump:3} are true. Let $\{(w^k,\delta^k)\}$ be the sequence generated by DSI-HG. If $\theta > 0$, $\sigma > 0$, $\tau > 0$ satisfy}
\begin{align*}
\max\{L_{12} (\sigma \tau)^{1/2}, L_{11} \sigma\} \leq 1/3, \qquad \theta = \max\left\{ \frac{1}{1 + \mu \sigma}, \frac{1}{1 + \mu \tau} \right\},
\end{align*}
\textit{we have}
\begin{align*}
\|w^* - w^K\|^2 = O(\theta^K), \qquad \|\delta^* - \delta^K\|^2 = O(\theta^K).
\end{align*}

% \subsection{Corollaries \ref{cor:DSI-HG1} and \ref{cor:DSI-HG2}} \label{append:corollaries}

\newpage

\section{OMITTED EXPERIMENTS SETTINGS} \label{append:settings}

All images are normalized into the range $[0,1]$.

% \subsection{Experiment settings for Section \ref{sec:co}} \label{append:settingsCO}

% We use a four-layer DNN, batch size 100, SGD with momentum 0.9 with learning rate $0.01$, and run 80 epochs. For MSI-HG, we use $\tau = 16 / 255$, $T = 3$, and $\eta = 2.5 \epsilon / T$.

% \subsection{Experiment settings for Section \ref{sec:exp}} \label{append:settingsExp}

\textbf{YOPO.} For YOPO-$M$-$N$, we use $M = 10$ and $N = 5$. YOPO-5-10, which is compared with PGD-40 AT on MNIST in the paper for YOPO \citep{zhang2019}, performed worse than YOPO-10-5. Other than the learning rate schedule and $(M,N)$, we use the code and the exact hyperparameter choices released by the authors of YOPO.

\textbf{DAT.} For DAT training, maximum first-order stationary condition (FOSC) value is set to $0.5$, and FOSC control epoch is set to $0.8$ times the number of training epochs. Both are the parameter settings used by the authors of DAT \citep{wang2019}.

\textbf{Other settings.} On SVHN and CIFAR-10, we use random horizontal flipping and random cropping augmentations. For MSI-HG, to promote exploration of the constraint set, we apply augmentation to $\delta$ as well.
All other settings are described in Section \ref{sec:exp}.

% \begin{table}[H]
% \caption{Hyperparameter choice for SSI-HG in Section \ref{sec:exp}.}
% \label{table:SSI_hyper_exp}
% \centering
% \vspace{1.0em}
% \def\arraystretch{1.2}
% \begin{tabular}{c c c c}
% \toprule
% \textbf{Dataset} & $\epsilon$ & $\tau$ & $T$ \\
% \cmidrule{1-4}
% MNIST & 0.4 & 0.15 & 5 \\
% \bottomrule
% \end{tabular}
% \end{table}

\begin{table}[H]
\caption{Hyperparameter choices for MSI-HG in Section \ref{sec:exp}.}
\label{table:MSI_hyper_exp}
\centering
\vspace{1.0em}
\def\arraystretch{1.2}
\begin{tabular}{c c c c}
\toprule
\textbf{Dataset} & $\epsilon$ & $\tau$ & $T$ \\
\cmidrule{1-4}
MNIST & 0.4 & 0.2 & 5 \\
SVHN & $4 / 255$ & $6 / 255$ & 10 \\
CIFAR-10 & $8 / 255$ & $14 / 255$ & 10 \\
\bottomrule
\end{tabular}
\end{table}

\end{document}